\pgfplotsset{compat=1.5}
\def\tikzrobot(#1,#2,#3){
    \draw (#1, #2) circle [radius=0.25cm];
    \draw [black] (#1, #2) -- +(#3:0.25cm);
}
\def\tikzdashedrobot(#1,#2,#3){
    \draw[dashed] (#1, #2) circle [radius=0.25cm];
    \draw[black] (#1, #2) -- +(#3:0.25cm);
}
\def\tikzcross(#1,#2,#3){
    \draw (#1-#3, #2-#3) -- (#1+#3, #2+#3);
    \draw (#1-#3, #2+#3) -- (#1+#3, #2-#3);
}
\newcommand*{\argmax}{\operatornamewithlimits{argmax}}
\newcommand*{\field}[1]{\mathbb{\MakeUppercase{#1}}}		
\newcommand*{\set}[1]{{\mathcal{\MakeUppercase{#1}}}}			
\newcommand*{\norm}[1]{\lVert #1 \rVert}	
\newcommand*{\flexnorm}[1]{\left\lVert #1 \right\rVert}
\newcommand*{\inner}[1]{\langle #1 \rangle}	
\newcommand*{\collection}[1]{{\mathfrak{\MakeUppercase{#1}}}} 
\newcommand*{\functional}[1]{{\MakeUppercase{#1}}}
\newcommand*{\R}{\field{R}} 
\newcommand*{\N}{\field{N}} 
\renewcommand{\vec}[1]{{\boldsymbol{\mathbf{#1}}}}
\newcommand*{\mat}[1]{\vec{\MakeUppercase{#1}}}
\newcommand*{\eye}{\mat{I}}							
\newcommand*{\transpose}{\mathsf{T}}
\newcommand*{\tr}{{\operatorname{tr}}}						
\newcommand*{\dataset}{\set{D}} 				
\newcommand*{\observation}{y} 					
\newcommand*{\observations}{\vec{\observation}}
\newcommand*{\obsNoise}{\nu}
\newcommand*{\gp}{\operatorname{GP}}
\newcommand*{\gpMean}{\mu}
\newcommand*{\mutinfo}[1]{I(#1)}				
\newcommand*{\slocation}{x} 
\newcommand*{\location}{\vec{\slocation}} 
\newcommand*{\locDomain}{\set{X}} 
\newcommand*{\locSet}{\set{Q}} 
\newcommand*{\dimension}{d}
\newcommand*{\locDim}{\dimension}
\newcommand*{\measure}[1]{\MakeUppercase{#1}}
\newcommand*{\expectation}{\mathbb{E}}
\newcommand*{\Pspace}{\set{P}} 					
\newcommand*{\pMeasure}{\measure{P}}
\newcommand*{\pDensity}{p}
\newcommand*{\prob}[1]{\mathbb{P}\left\lbrace #1 \right\rbrace}
\newcommand*{\pSet}{\set{R}}
\newcommand*{\kl}[2]{D_\mathrm{KL}(#1||#2)}
\newcommand*{\normal}{\measure{N}}					
\newcommand*{\Dirac}{\measure{d}}
\newcommand*{\diff}{{\mathop{}\operatorname{d}}}
\newcommand*{\rv}{\xi}								
\newcommand*{\as}{\mathrm{(a.s.)}}			
\newcommand*{\filtration}{\collection{F}}
\newcommand*{\qry}{E}								
\newcommand*{\loc}{L}								
\newcommand*{\srandom}[1]{\tilde{#1}}
\newcommand*{\random}[1]{\vec{\srandom{#1}}}
\newcommand*{\uncertain}[1]{\hat{#1}}
\newcommand*{\Px}[1][\location]{\hat{\pMeasure}_{#1}}
\newcommand*{\locMean}{\vec{\hat{\location}}}
\newcommand*{\locNoise}{\vec{\epsilon}}
\newcommand*{\uObsNoise}{\zeta}
\newcommand*{\kp}{\uncertain{k}}
\newcommand*{\Kp}{{\mat{\uncertain{k}}}}
\newcommand*{\error}{\rho}
\newcommand*{\Hspace}{\set{H}} 						
\newcommand*{\meanMap}{\psi}						
\newcommand*{\lspan}{{\operatorname{span}}} 
\newcommand*{\nullspace}{{\operatorname{Null}}} 
\newcommand*{\operator}[1]{\mat{#1}}
\newcommand*{\projection}{\operator{\Pi}}
\newcommand*{\af}{h} 						
\newcommand*{\Sspace}{\set{S}} 				
\newcommand*{\regret}{r}					
\newcommand*{\Regret}{\functional{R}}					
\newcommand*{\uregret}{\uncertain{\regret}}	
\newcommand*{\uRegret}{\uncertain{\Regret}}	
\newcommand*{\mig}{\gamma}
\newcommand*{\iterIdx}{t}
\newcommand*{\primIdx}{i}	
\newcommand*{\secIdx}{j}	
\newcommand*{\nIterations}{n}
\newcommand*{\nObs}{n}
\newcommand*{\nFeatures}{m}
\newcommand*{\anyspace}{\set{X}}
\newcommand*{\anotherspace}{\set{Y}}
\newcommand*{\anyset}{\set{U}}
\newcommand*{\anyelement}{x}
\newcommand*{\Lipschitz}{\ell}	
\newcommand*{\bound}{b}		
\newcommand*{\factor}{\eta}	
\newcommand*{\anyconstant}{c}
\newcommand*{\anyfunction}{u}
\newcommand*{\anydomain}{\set{W}}
\newcommand*{\map}{\operator{M}} 
\newcommand*{\iid}{i.i.d.\xspace}
\declaretheorem[name=Theorem]{theorem}
\declaretheorem[numberlike=theorem,name=Proposition]{proposition}
\declaretheorem[numberlike=theorem,name=Lemma]{lemma}
\declaretheorem[numberlike=theorem,name=Definition]{definition}
\providecommand{\varitem}{} 
\newenvironment{axioms}{\statements{A}}{\endstatements}
\begin{document}
%
\runningauthor{Rafael Oliveira, Lionel Ott and Fabio Ramos}

\twocolumn[

\aistatstitle{Bayesian optimisation under uncertain inputs}
\newcommand{\affmark}[1]{\textsuperscript{\textnormal{#1}}}

\newcommand{\email}[1]{\href{mailto:#1}{#1}}

\aistatsauthor{Rafael Oliveira \And Lionel Ott \And  Fabio Ramos}
\aistatsaddress{\email{rafael.oliveira@sydney.edu.au}\\The University of Sydney \And \email{lionel.ott@sydney.edu.au}\\The University of Sydney \And \email{fabio.ramos@sydney.edu.au}\\The University of Sydney \& NVIDIA}
]

\begin{abstract}
Bayesian optimisation (BO) has been a successful approach to optimise functions which are expensive to evaluate and whose observations are noisy. Classical BO algorithms, however, do not account for errors about the location where observations are taken, which is a common issue in problems with physical components. In these cases, the estimation of the actual query location is also subject to uncertainty. In this context, we propose an upper confidence bound (UCB) algorithm for BO problems where both the outcome of a query and the true query location are uncertain. The algorithm employs a Gaussian process model that takes probability distributions as inputs. Theoretical results are provided for both the proposed algorithm and a conventional UCB approach within the uncertain-inputs setting. Finally, we evaluate each method's performance experimentally, comparing them to other input noise aware BO approaches on simulated scenarios involving synthetic and real data.
\end{abstract}

\section{Introduction}
Bayesian optimisation (BO) \citep{Brochu2010} is a technique to find the global optimum of functions that are unknown, expensive to evaluate, and whose output observations are possibly noisy. In this sense, BO has been applied across different fields to a wide class of problems, including hyper-parameter tuning \citep{Snoek2012}, policy search \citep{Wilson2014}, environmental monitoring \citep{Marchant2012}, robotic grasping \citep{Nogueira2016}, etc. Although taking into account that we might have a noisy observation of the function's output value, conventional BO approaches assume that the function has been sampled precisely at the specified query location within the given search space. While this is true for many applications of BO, there are certain problems, especially in areas of robotics and process control, in which this assumption typically does not hold.

As an illustration, consider a problem where we are interested in finding the peak of an environmental process $f(\location)$ over a region $\Sspace \subset \R^\locDim$. To this end, we send a mobile robot to different target locations $\location_\iterIdx \in \Sspace$ to observe the process. Unfortunately, due to localisation uncertainty and motion control errors, \emph{execution noise} prevents the robot from reaching the planned target location exactly. Instead, after each query, the robot provides us with an estimate of its actual location $\random{\location}_\iterIdx$ via a probability distribution $\pMeasure^\loc_\iterIdx$, which takes into account \emph{localisation noise}, as depicted in \autoref{fig:querying}. From each query, we obtain a noisy observation of the environmental process $\observation_\iterIdx = f(\random{\location}_\iterIdx) + \uObsNoise_\iterIdx$, where $\uObsNoise_\iterIdx$ is an independent noise term. In this scenario, both the function inputs $\random{\location}_\iterIdx$, i.e. query locations, and outputs $f(\random{\location}_\iterIdx)$ are not directly observable.

This paper investigates optimisation problems where input noise affects both the execution of a query and the estimation of its true location. In particular, we analyse the standard BO approach when employing the improved Gaussian process upper-confidence bound (IGP-UCB) \citep{Chowdhury2017} algorithm under input noise, and we propose the \emph{uncertain-inputs Gaussian process upper confidence bound} (uGP-UCB) algorithm. The latter is equipped with a GP model that takes probability distributions as inputs in a similar framework to \citet{Oliveira2017isrr}. We apply kernel embeddings techniques \citep{Muandet2016} to obtain the first theoretical results for BO under uncertain inputs, bounding the regret of both uGP-UCB and IGP-UCB. In addition, experiments provide empirical performance evaluations of different BO approaches to problems involving input noise.

\begin{figure}[t]
\begin{center}
\begin{tikzpicture}
	\fill[rotate around={60:(0,0)},blue!50,opacity=0.5] (0,0) ellipse (0.5cm and 1cm); 
	\node (probstart) at (-0.5cm,0.3cm) {$\pMeasure^\loc_{t-1}$};
    \tikzrobot(0,0,60); 
	\fill[red!30,opacity=0.75] (2,2) ellipse (1cm and 0.75cm); 
   	\node (probexec) at (1.5,2.25) {$\pMeasure^\qry_{\location}$};
    \draw (0,0) .. controls (1,2) .. (2,2); 
    \tikzcross(2,2,0.1); 
	\node (target) at (2,1.6) {$\location_\iterIdx$};

	\fill[rotate around={30:(2.5,2.5)},blue!50,opacity=0.5] (2.5,2.5) ellipse (0.5cm and 1.2cm); 
	\tikzrobot(2.5,2.5,30);
	\node (probend) at (2.1,3.25) {$\pMeasure^\loc_{t}$};

    \tikzdashedrobot(0.5,-0.25,30); 
    \node (truestart) at (0.5,-0.25) {}; 
   	\node (truestartlabel) at (1.25,-0.25) {$\random{\location}_{t-1}$};
    \tikzdashedrobot(2.75,2,60);
    \node (trueend) at (2.75,2) {}; 
    \draw[dashed] (truestart) to[out=30, in=240, looseness=1] (trueend);
	\node (truequery) at (3.25,2) {$\random{\location}_\iterIdx$};
\end{tikzpicture}
\end{center}
\caption{At time $t-1$, the robot is estimated to be at some $\random{\location}_{t-1} \sim \pMeasure^\loc_{t-1}$. It is then sent to target location $\location_\iterIdx$. However, due to uncertainty in the query execution, represented by $\pMeasure_{\location}^\qry$, the robot actually ends up at another location $\random{\location}_\iterIdx$, whose belief distribution, according to the localisation system, is represented by $\pMeasure_\iterIdx^\loc$. The robot's true locations and true path are indicated by the dashed lines.}
\label{fig:querying}
\end{figure}

\section{Related work}
\label{sec:related}
Recently several BO approaches that deal with problems where the execution of queries to an objective function is affected by uncertainty have been proposed. \citet{Nogueira2016} presented a method that applies the unscented transform \citep{Wan2000} to query BO's acquisition function. By considering a stochastic query execution process, the method is able to find robust solutions to robotics problems such as grasping. Another approach to handle query uncertainty is presented in \citet{Pearce2017} to optimise stochastic simulations. In that case, query uncertainty refers to imperfect knowledge about input variates for a simulation model \citep{Lam2016}. \citeauthor{Pearce2017} apply Monte Carlo integration to marginalise out input variates that are unknown when querying BO's acquisition function. In broader terms, all of these problems can be described as optimising an integrated cost function, where one may instead use a GP prior over the integrated function \citep{Beland2017,Toscano-Palmerin2018}. Contrasted to uGP-UCB, however, the approaches mentioned above only deal with independent and identically distributed input noise and mostly offer no known theoretical guarantees. In addition, the data points in their GP datasets are only point estimates, instead of distributions as used in this paper.

Another BO approach is presented in \citet{Oliveira2017isrr}, which employed a Gaussian process (GP) model that takes probability distributions directly as inputs \citep{Girard2004,Dallaire2011}. However, \citeauthor{Oliveira2017isrr}'s method intent is to learn a model of the objective function with a robot, while minimising travelled distance, not as an optimisation framework.

Problems like the one illustrated in \autoref{fig:querying} can also be related to partially-observable Markov decision processes (POMDPs) \citep{Marchant2014,Ling2016}. This paper, however, is concerned with a general optimisation setup. 

\section{Problem formulation}
We consider an optimisation problem where an algorithm sequentially selects target locations $\location_\iterIdx$ within a compact search space $\Sspace\subset\locDomain$ at which to query a function $f:\locDomain\to\R$, seeking its global optimum. In addition, the query execution itself is a stochastic process, leading the query to be made at some $\random{\location}_\iterIdx|\location_\iterIdx\sim\pMeasure^\qry_{\location}$, instead.

How close the algorithm is to the global optimum can be measured in terms of regret. In a bandits optimisation setting, the \emph{instantaneous regret} suffered by a maximisation algorithm for a choice of target $\location_\iterIdx$ in our problem is given by:
\begin{equation}
\srandom{\regret}_\iterIdx = \max_{\location\in\Sspace} f(\location) - f(\random{\location}_\iterIdx)~.
\end{equation}
In the deterministic-inputs case, the algorithmic design goal is to minimise cumulative regret, ensuring that the algorithm eventually hits the global optimum of $f$ \citep{Srinivas2010,Bull2011}. However, as $\random{\location}_\iterIdx$ is subject to noise, one can attempt to minimise the expected regret, which is such that:
\begin{equation}
\begin{split}
\expectation[\srandom{\regret}_\iterIdx|\location_\iterIdx] &= \max_{\location\in\Sspace} f(\location) - \expectation[f(\random{\location}_\iterIdx)|\location_\iterIdx] = \error_\qry + \uregret_\iterIdx~,
\label{eq:eregret}
\end{split}
\end{equation}
where:
\begin{align}
\error_\qry &:= \max_{\location\in\Sspace} f(\location) - \max_{\location\in\Sspace}\expectation[f(\random{\location})|\location]\label{eq:qry-error}\\
\uregret_\iterIdx &:= \max_{\location\in\Sspace}\expectation[f(\random{\location})|\location] - \expectation[f(\random{\location}_\iterIdx)|\location_\iterIdx]~.\label{eq:uregret}
\end{align}
Here $\error_\qry$ is a constant, representing the difference between the maximum of the function and the maximum value any algorithm is expected to reach under the query execution uncertainty.  However, $\uregret_\iterIdx$
is controllable via the algorithm's choices of $\location_\iterIdx$ and is associated with the goal of finding:
\begin{equation}
\location^* \in \argmax_{\location \in \Sspace} \expectation[f(\random{\location})|\location] ~,
\label{eq:main_prob}
\end{equation}
which defines a target location that maximises the expected value of the function $f$ under the querying process noise. As defined, $\location^*$ minimises the expected regret to a lower bound given by $\error_\qry$ and defines an optimum location which is robust to execution noise. Therefore, we call $\uregret_\iterIdx$ the \emph{uncertain-inputs regret}. Similarly, we also define the \emph{uncertain-inputs cumulative regret} $\uRegret_\nIterations=\sum_{\iterIdx=1}^{\nIterations} \uregret_\iterIdx$. With these definitions, an algorithm whose uncertain-inputs cumulative regret $\uRegret_\nIterations$ grows sub-linearly achieves a minimum on the expected regret:
\begin{equation}
\begin{split}
\lim_{\nIterations \to \infty}\min_{\iterIdx\leq\nIterations} \expectation[\srandom{\regret}_\iterIdx|\location_t] &= \error_\qry + \lim_{\nIterations \to \infty} \min_{\iterIdx\leq\nIterations} \uregret_\iterIdx\\
&\leq \error_\qry + \limsup_{\nIterations \to \infty}\frac{\uRegret_\nIterations}{\nIterations} = \error_\qry~.
\end{split}
\end{equation}

\paragraph{Distribution assumptions:} We are assuming that the query location distribution $\pMeasure_\location^\qry$ marginalises over all other variables that could affect the querying process, such as starting points and effects from the environment that the agent is in. In addition, the true $\pMeasure_\location^\qry$ might be unknown. However, \emph{after} each query, we assume that a distribution $\pMeasure_\iterIdx^\loc$ estimating the true query location is available. These probability distributions are illustrated by the example in \autoref{fig:querying} for a robotics case.

For each $\location_\iterIdx$, the algorithm is provided with observations $\observation_t = f(\random{\location}_\iterIdx) + \uObsNoise_\iterIdx$, where $\uObsNoise_\iterIdx$ is $\sigma_\uObsNoise$-sub-Gaussian observation noise, for some $\sigma_\uObsNoise \geq 0$. Sub-Gaussian random variables can be thought of as any random variable whose tail distribution decays at least as fast as a Gaussian. Both Gaussian and bounded random variables fall in this category \citep{Boucheron2013}.

\paragraph{Regularity assumptions:} We assume $f:\locDomain\to\R$ to be an element of $\Hspace_k$, which is a reproducing kernel Hilbert space (RKHS) \citep{Scholkopf2002}.
For a given positive-definite kernel $k:\locDomain\times\locDomain\to\R$, a RKHS $\Hspace_{k}$ is a Hilbert space of functions with inner product $\inner{\cdot,\cdot}_k$ and norm $\norm{\cdot}_k = \sqrt{\inner{\cdot,\cdot}_k}$ such that $f(\location) = \inner{f,k(\cdot,\location)}_k$, for any $f\in\Hspace_k$ and any $\location\in\locDomain$.
We assume $k$ is continuous and bounded on $\locDomain\times\locDomain$, with $k(\location,\location) \leq 1, \forall \location\in\locDomain$, and that $\norm{f}_k\leq \bound$ for the objective function in \autoref{eq:main_prob}, where $\bound>0$ is known.\footnote{These assumptions are met by most of the popular kernels in BO and are common in the regret bounds literature.} When not explicitly mentioned, assume an Euclidean domain for $f$, i.e. $\locDomain\subseteq \R^\locDim$, $\locDim\in\N$.

\section{The uGP-UCB algorithm}
\label{sec:method}
This section describes a method for Bayesian optimisation under uncertain inputs. The section starts by presenting a Gaussian process that allows direct modelling of objectives defined in terms of expectations. This GP approach is then applied to derive a BO algorithm named \emph{uncertain-inputs Gaussian process upper confidence bound} (uGP-UCB), presented in the second part of this section.

\subsection{Gaussian process priors with uncertain inputs}
To extend BO to the case where query locations $\location$ are uncertain, we can redefine the objective in \autoref{eq:main_prob} as a function of the query probability distributions. Let $\Pspace$ denote the set containing all probability measures on $\locDomain \subseteq \R^\locDim$. With $f\in\Hspace_k$, we can define the map:
\begin{equation}
\begin{split}
\meanMap: \Pspace &\to \Hspace_k\\
\pMeasure &\mapsto \int_{\locDomain} k(\cdot, \location) \diff\pMeasure(\location)~.
\end{split}
\label{eq:kme}
\end{equation}
For any $\locDomain$-valued random variable $\random{\location}$ distributed according to $\pMeasure\in\Pspace$, we then have that:
\begin{equation}
\expectation_\pMeasure[f]:=\expectation[f(\random{\location})] = \inner{\meanMap_\pMeasure, f }_k, \quad \forall f \in \Hspace_k~,
\label{eq:kme-exp}
\end{equation}
where $\meanMap_\pMeasure := \meanMap(\pMeasure)$. If the kernel $k$ is characteristic, such as radial kernels \citep{Sriperumbudur2011}, $\meanMap$ is injective, defining a one-to-one relationship between measures in $\Pspace$ and elements of $\Hspace_k$. Therefore, $\meanMap$ is referred to as the mean map, and $\meanMap_\pMeasure$ as the kernel mean embedding of $\pMeasure$ \citep{Muandet2016}.

Using $\meanMap$ as defined in \autoref{eq:kme}, one can construct kernels over the set of probability measures $\Pspace$. In particular, for any $\pMeasure,\pMeasure' \in \Pspace$, we have that:
\begin{equation}
\kp(\pMeasure,\pMeasure') := \langle \meanMap_{\pMeasure}, \meanMap_{\pMeasure'} \rangle_k = \int_{\locDomain}\int_{\locDomain} k(\location,\location')\diff\pMeasure(\location)\diff\pMeasure'(\location') \label{eq:ugp-k}
\end{equation}
defines a positive-definite kernel over $\Pspace$ \citep{Muandet2012}. Notice that in this formulation, even if we have inputs representing the same random variable $\random{\location}\sim\pMeasure$, we have $\kp(\pMeasure,\pMeasure) = \inner{\meanMap_{\pMeasure},\meanMap_\pMeasure} \neq \expectation[k(\random{\location},\random{\location})]$, which is then different from other kernel formulations for models with uncertain inputs \citep{Dallaire2011}.

The kernel in \autoref{eq:ugp-k} is associated with a RKHS $\Hspace_{\kp}$ containing functions over the space of probability measures $\Pspace$.  Besides the linear kernel in \autoref{eq:ugp-k}, many other kernels on $\Pspace$ can be defined via $\meanMap$, e.g. radial kernels using $\norm{\meanMap_\pMeasure-\meanMap_{\pMeasure'}}_k$ as a metric on $\Pspace$ \citep{Muandet2012}. However, the simple kernel in \autoref{eq:ugp-k} provides us with a useful property to model the objective in \autoref{eq:main_prob}, as presented next.

\begin{lemma}[restate=expectedfunction,name={Expected function}]
\label{thr:exp-f}
Any $f \in \Hspace_k$ is continuously mapped to a corresponding $\uncertain{f} \in \Hspace_{\kp}$, which is such that:
\begin{equation}
\begin{split}
\forall \pMeasure \in \Pspace, \quad \uncertain{f}(\pMeasure) &= \expectation_\pMeasure[f]\\
\norm{\uncertain{f}}_{\kp} &= \norm{f}_k~.
\end{split}
\label{eq:exp-f-def}
\end{equation}
The mapping $f\mapsto\uncertain{f}$ constitutes an isometric isomorphism between $\Hspace_k$ and $\Hspace_{\kp}$.
\end{lemma}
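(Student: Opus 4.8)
The plan is to recognise $\kp$ as the linear kernel generated by the feature map $\meanMap:\Pspace\to\Hspace_k$ and then invoke the standard characterisation of the RKHS attached to a feature map. Since \autoref{eq:ugp-k} writes $\kp(\pMeasure,\pMeasure')=\inner{\meanMap_\pMeasure,\meanMap_{\pMeasure'}}_k$, the map $\meanMap$ is a feature map for $\kp$ with feature space $\Hspace_k$. By the general theory of RKHSs generated by a feature map, $\Hspace_{\kp}$ consists exactly of the functions $\pMeasure\mapsto\inner{w,\meanMap_\pMeasure}_k$ with $w\in\Hspace_k$, equipped with the norm $\norm{\uncertain{f}}_{\kp}=\min\{\norm{w}_k : \uncertain{f}=\inner{w,\meanMap_\cdot}_k\}$, the minimum being attained by the representative lying in $\overline{\lspan}\{\meanMap_\pMeasure:\pMeasure\in\Pspace\}$.

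With this description in hand, I would define the candidate correspondence by $\uncertain{f}(\pMeasure):=\inner{f,\meanMap_\pMeasure}_k$ for each $f\in\Hspace_k$. Then \autoref{eq:kme-exp} immediately gives $\uncertain{f}(\pMeasure)=\expectation_\pMeasure[f]$, the first claimed property, while the feature-map description above shows $\uncertain{f}\in\Hspace_{\kp}$. Linearity of $f\mapsto\uncertain{f}$ is clear, and surjectivity onto $\Hspace_{\kp}$ follows because every element of $\Hspace_{\kp}$ is of the form $\inner{w,\meanMap_\cdot}_k=\uncertain{w}$.

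The crux is the isometry $\norm{\uncertain{f}}_{\kp}=\norm{f}_k$, for which only the bound ``$\leq$'' is automatic. The key observation is that the Dirac measures belong to $\Pspace$: for the point mass $\delta_\location$ at any $\location\in\locDomain$ one has $\meanMap_{\delta_\location}=\int_\locDomain k(\cdot,\location')\diff\delta_\location(\location')=k(\cdot,\location)$. Hence $\{k(\cdot,\location):\location\in\locDomain\}\subseteq\{\meanMap_\pMeasure:\pMeasure\in\Pspace\}$, and since the closed span of the former is all of $\Hspace_k$ by the reproducing property, $\overline{\lspan}\{\meanMap_\pMeasure:\pMeasure\in\Pspace\}=\Hspace_k$. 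This density does double duty: it makes $w\mapsto\inner{w,\meanMap_\cdot}_k$ injective (if $\uncertain{f}=0$ then $f(\location)=\inner{f,k(\cdot,\location)}_k=0$ for all $\location$, so $f=0$), giving injectivity of $f\mapsto\uncertain{f}$; and it forces the minimal-norm representative of $\uncertain{f}$ to be $f$ itself, yielding $\norm{\uncertain{f}}_{\kp}=\norm{f}_k$. Continuity of the map is then a consequence of its being an isometry, completing the isometric isomorphism.

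I expect the main obstacle to be pinning down the norm identity rather than the mere bound $\norm{\uncertain{f}}_{\kp}\leq\norm{f}_k$; everything hinges on showing that the closed linear span of the mean embeddings exhausts $\Hspace_k$, which is precisely what including the Dirac measures in $\Pspace$ guarantees. A secondary point worth verifying is that each $\meanMap_\pMeasure$ is a well-defined element of $\Hspace_k$, which holds because $\norm{k(\cdot,\location)}_k=\sqrt{k(\location,\location)}\leq 1$ renders the Bochner integral in \autoref{eq:kme} absolutely convergent.
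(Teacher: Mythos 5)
Your proposal is correct, but it takes a genuinely different route from the paper. The paper constructs the isometry explicitly: it defines an operator $\map_0$ on the pre-RKHS $\Hspace_k^0 = \lspan\{k(\cdot,\location)\mid\location\in\locDomain\}$ by sending $\sum_i \alpha_i k(\cdot,\location_i)$ to $\sum_i \alpha_i \kp(\cdot,\Dirac_{\location_i})$, verifies linearity and norm preservation on this dense subspace, extends to all of $\Hspace_k$ via the bounded linear extension theorem, and finally obtains invertibility through the Hilbert-space adjoint, showing $\map^* = \map^{-1}$. You instead invoke the feature-map characterization of an RKHS \citep[Thm. 4.21]{Steinwart2008}: since $\kp(\pMeasure,\pMeasure') = \inner{\meanMap_\pMeasure,\meanMap_{\pMeasure'}}_k$ exhibits $\meanMap$ as a feature map with feature space $\Hspace_k$, the space $\Hspace_{\kp}$ consists exactly of the functions $\inner{w,\meanMap_\cdot}_k$, $w\in\Hspace_k$, with the quotient (minimal-representative) norm. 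The crux is the same in both arguments—Dirac measures lie in $\Pspace$ and $\meanMap_{\Dirac_\location}=k(\cdot,\location)$, so the closed span of the embeddings exhausts $\Hspace_k$—but it is deployed differently: in the paper this density underlies the pre-RKHS construction and the identification of $\map^*$ as the inverse, whereas in your argument it collapses the quotient norm (the minimal representative of $\uncertain{f}$ is $f$ itself) and annihilates the kernel of $w\mapsto\inner{w,\meanMap_\cdot}_k$, delivering isometry, injectivity, and surjectivity in one stroke. Your route is shorter and gets surjectivity for free from the general theorem, which the paper has to earn through the adjoint computation; the paper's route is more self-contained, relying only on elementary extension and adjoint facts. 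Your closing check that each $\meanMap_\pMeasure$ is a well-defined Bochner integral (since $\norm{k(\cdot,\location)}_k=\sqrt{k(\location,\location)}\leq 1$) is a detail the paper's proof leaves implicit.
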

\begin{proof}[Proof sketch]
The proof follows from the fact that Dirac measures $\Dirac_\location$, for $\location\in\locDomain$, are also probability measures in $\Pspace$. Since $k(\location,\location')=\kp(\Dirac_\location,\Dirac_{\location'})$, $\forall\location,\location'\in\locDomain$, we can define a bijective mapping between $\Hspace_{k}$ and $\Hspace_{\kp}$ that preserves norms. A complete proof is presented in the appendix.
\end{proof}

As a positive-definite kernel, $\kp$ defines the covariance function of a Gaussian process $\gp(0,\kp)$ modelling functions over $\Pspace$. This GP model can then be applied to learn $\uncertain{f}$ from a given set of observations $\dataset_\nObs = \{(\pMeasure_\primIdx,\observation_\primIdx)\}_{\primIdx=1}^{\nIterations}$, as in \citet{Girard2004}. Under a zero-mean GP assumption, the value of $\uncertain{f}(\pMeasure_*)$ for a given $\pMeasure_*\in\Pspace$ follows a Gaussian posterior distribution with mean and variance given by:
\begin{align}
\uncertain{\gpMean}_\nObs(\pMeasure_*) &= \vec{\kp}_\nObs(\pMeasure_*)^\transpose (\Kp_\nObs + \lambda \mat{I})^{-1} {\observations}_\nObs~, \label{eq:ugp-mean}\\
\kp_\nObs(\pMeasure,\pMeasure') &= \kp(\pMeasure,\pMeasure') - \vec{\kp}_\nObs(\pMeasure)^\transpose (\Kp_\nObs + \lambda \mat{I})^{-1}\vec{\kp}_\nObs(\pMeasure')\label{eq:posterior_kp}\\ 
\uncertain{\sigma}^2_\nObs(\pMeasure_*) &= \kp_\nObs(\pMeasure_*,\pMeasure_*)~, \label{eq:ugp-var}
\end{align}
where $\observations_\nIterations:=[\observation_1,\dots,\observation_\nIterations]^\transpose$, $\vec{\kp}_\nObs(\pMeasure_*) := [\kp(\pMeasure_*,\pMeasure_1),\dots,\kp(\pMeasure_*,\pMeasure_\nObs)]^\transpose$ and $[\Kp_\nObs]_{ij} = \kp(\pMeasure_i, \pMeasure_j)$. For a $\uncertain{f}\in\Hspace_{\kp}$, we have that $\uncertain{f}$ is generally not a sample from the GP \citep[p. 131]{Rasmussen2006}. However, we always have $\uncertain{\gpMean}_\nObs\in\Hspace_{\kp}$ by definition, allowing the GP to learn an approximation for $\uncertain{f}$. Therefore, in these equations, $\lambda\geq 0$ is simply a parameter that is not necessarily related to the true observation noise as in usual GP modelling assumptions \citep{Rasmussen2006}.

\subsection{Upper-confidence bound}
Coming back to the problem definition in \autoref{eq:main_prob}, we consider a function $\uncertain{f}:\Pspace\to\R$, such that for any $\random{\location}\sim\pMeasure, ~ \uncertain{f}(\pMeasure) = \expectation[f(\random{\location})]$. The GP model proposed in the previous section allows deriving a BO algorithm to solve the problem in \autoref{eq:main_prob}. Given a set of past observations $\dataset_{\iterIdx-1}=\{(\pMeasure_\primIdx,\observation_\primIdx)\}_{\primIdx=1}^{\iterIdx-1}$,
the following defines an upper confidence bound (UCB) acquisition function:
\begin{equation}
\af(\pMeasure|\dataset_{\iterIdx-1}) = \uncertain{\gpMean}_{\iterIdx-1}(\pMeasure) + \beta_\iterIdx {\uncertain{\sigma}_{\iterIdx-1}(\pMeasure)} ~,
\label{eq:ugp-ucb}
\end{equation}
where $\beta_\iterIdx$ is a parameter controlling the exploration-exploitation trade-off. The theoretical results in the next section will show that $\beta_\iterIdx$ can be set accordingly for $\af(\pMeasure|\dataset_{\iterIdx-1})$ to maintain a high-probability upper bound on $\uncertain{f}$.

Querying the GP model with $\location\mapsto\pMeasure_{\location}^\qry$ would allow selecting points $\location_\iterIdx$ based on an estimate for $\expectation_{\pMeasure^\qry_{\location_\iterIdx}}[f]:=\expectation[f(\random{\location}_\iterIdx)|\location_\iterIdx]$. However, in general, the true mapping $\location \mapsto \pMeasure_{\location}^\qry$ is unknown. Instead, we use a model $\location\mapsto\Px$ whose approximation error $|\expectation_{\pMeasure_{\location}^\qry}[f]-\expectation_{\Px}[f]|$ is small.

\autoref{alg:ugp-ucb} presents the uGP-UCB algorithm. Equipped with the acquisition function in \autoref{eq:ugp-ucb}, at each iteration $\iterIdx$, the algorithm selects the target location ${\location}_\iterIdx$ that maximises $\af(\Px|\dataset_{\iterIdx-1})$ (\autoref{lin:ugp-ucb-af}). 
In \autoref{lin:ugp-ucb-sample}, the function $f$ is queried at some location $\random{\location}_\iterIdx|\location_\iterIdx\sim\pMeasure_{\location_\iterIdx}^\qry$. After the query is done, the algorithm is provided with an observation $\observation_\iterIdx = f(\random{\location}_\iterIdx)+\uObsNoise_\iterIdx$ and an independent estimate for $\random{\location}_\iterIdx$ given by $\pMeasure_\iterIdx^\loc$, as described earlier. In \autoref{lin:ugp-ucb-update}, the GP model is updated with the new observation pair $(\pMeasure_\iterIdx^\loc,\observation_\iterIdx)$. This process then repeats for a given number of iterations $\nIterations$. As a result, the algorithm finishes with an estimate of the optimum location $\location^*$ given as the target location with the best estimated outcome $\location_\nIterations^*$ (\autoref{lin:ugp-ucb-final}).

\begin{algorithm}[t]
\caption{uGP-UCB}
\label{alg:ugp-ucb}
\DontPrintSemicolon
\KwIn{$\Sspace$: search space\linebreak
$\nIterations$: total number of iterations
}
\For{$\iterIdx\in\{1,\dots,\nIterations\}$}{
	$\location_\iterIdx = \underset{\location \in \Sspace}{\argmax{}} \uncertain{\gpMean}_{\iterIdx-1}(\Px) + \beta_\iterIdx {\uncertain{\sigma}_{\iterIdx-1}(\Px)}$\label{lin:ugp-ucb-af}\;
	$(\pMeasure^\loc_\iterIdx,\observation_\iterIdx) \leftarrow$ Sample $f$ at $\random{\location}_\iterIdx|\location_\iterIdx \sim\pMeasure_{\location_\iterIdx}^\qry$\label{lin:ugp-ucb-sample}\;
	$\dataset_{\iterIdx} = \dataset_{\iterIdx-1} \cup \{(\pMeasure^\loc_\iterIdx,\observation_\iterIdx)\}$\label{lin:ugp-ucb-update}\;
}
$\location_\nIterations^* = \underset{{\location_\iterIdx \in \dataset_\nIterations}}{\argmax{}} \uncertain{\gpMean}_\nIterations(\Px[\location_\iterIdx])$\label{lin:ugp-ucb-final}\;
\KwResult{$\location_\nIterations^*$}
\end{algorithm}

\section{Theoretical results}
\label{sec:analysis}
This section presents theoretical results bounding the uncertain-inputs regret of the uGP-UCB algorithm and a standard BO approach, IGP-UCB \citep{Chowdhury2017}, which was not originally designed to handle input noise. The theoretical analysis presented in this paper is mainly based on \citeauthor{Chowdhury2017}'s results, which are advantageous in the uncertain-inputs setting due to mild assumptions on the observation noise. However, the results in this section also bring new insights into BO methods for problems with uncertain inputs. We refer the reader to the appendix for complete proofs of the next results. 

\subsection{The uncertain-inputs regret of IGP-UCB}
\label{igp-ucb-bounds}
In the uncertain-inputs setting, IGP-UCB selects target locations $\location_\iterIdx$ by maximising $\gpMean_{\iterIdx-1}(\location)+\beta_\iterIdx\sigma_{\iterIdx-1}(\location)$, where $\gpMean_{\iterIdx-1}$ and $\sigma^2_{\iterIdx-1}$ are respectively the posterior mean and variance of the deterministic-inputs $\gp(0,k)$ given observations $\{(\location_i,\observation_i)\}_{i=1}^{\iterIdx-1}$. For an asymptotic analysis, both the targets $\{\location_\iterIdx\}_{\iterIdx=1}^\infty$ and the equivalent observation noise $\{\obsNoise_\iterIdx\}_{\iterIdx=1}^\infty$, where $\obsNoise_\iterIdx:=\observation_\iterIdx-\expectation[f(\random{\location}_\iterIdx)|\location_\iterIdx]\neq\uObsNoise_\iterIdx$, can be treated as sequences of random variables. At a given round $\iterIdx\geq 1$, the history $\{\location_\primIdx,\obsNoise_\primIdx\}_{\primIdx=1}^\iterIdx$ generates a $\sigma$-algebra $\filtration_\iterIdx$, and the sequence $\{\filtration_\iterIdx\}_{\iterIdx=0}^\infty$ defines a filtration \citep{Bauer1981}. The sub-Gaussian condition on the sequence $\{\obsNoise_\iterIdx\}_{\iterIdx=1}^\infty$ is then formally defined as:
\begin{equation}
\forall \iterIdx \geq 1, ~ \forall \lambda \in \R, \quad \expectation[e^{\lambda \obsNoise_\iterIdx}|\filtration_{\iterIdx-1}] \leq e^{\lambda^2\sigma_\obsNoise^2/2} ~\as~,
\end{equation}
which denotes an upper bound on a conditional expectation \citep{Bauer1981}, so that the inequality above is defined as holding \emph{almost surely} (a.s.).

The results in \citet{Chowdhury2017} bound the cumulative regret of IGP-UCB in terms of the maximum information gain:
\begin{equation}
\mig_{\nIterations} := \max_{\locSet\subset\Sspace:|\locSet|=\nIterations} \mutinfo{\observations_\nIterations,\vec{g}_\nIterations|\locSet}~,
\label{eq:uibo-mig}
\end{equation}
where $\mutinfo{\observations_\nIterations,\vec{g}_\nIterations|\locSet}$ represents the mutual information between $\observations_\nIterations=\vec{g}_\nIterations+\vec{\obsNoise}'_\nIterations$ and $\vec{g}_\nIterations\sim\normal(\vec{0},\mat{K}_\nIterations)$, with $[\mat{K}_\nIterations]_{ij}=k(\location_i,\location_j)$, $\location_i,\location_j\in\locSet$ and $\vec{\obsNoise}'_\nIterations\sim\normal(\vec{0},\lambda\eye)$. Here $\lambda>0$ is the same parameter in \autoref{eq:ugp-mean}. Considering these definitions, we derive the following.

\begin{theorem}[restate=thrboregret,name=IGP-UCB uncertain-inputs regret] \label{thr:bo}
For any $f\in\Hspace_k$, assume that:
\begin{enumerate}
\item the mapping $\location\mapsto\expectation_{\pMeasure_\location^\qry}[f]$ defines a function $g \in \Hspace_k(\Sspace)$ and $\norm{g}_k\leq \bound$;\label{thr:bo-g}
\item $\forall \location \in \Sspace, \Delta f_{\pMeasure_\location^\qry}:=f(\random{\location}^\qry)-\expectation_{\pMeasure^\qry_{\location}}[f]$ is $\sigma_\qry$-sub-Gaussian, for a given $\sigma_\qry > 0$, where $\random{\location}^\qry\sim\pMeasure^\qry_{\location}$;\label{thr:bo-sg}
\item and $\uObsNoise_\iterIdx$ is conditionally $\sigma_\uObsNoise$-sub-Gaussian.\label{thr:bo-n}
\end{enumerate}
Then running IGP-UCB with $\sigma_\obsNoise := \sqrt{\sigma_\qry^2+\sigma_\uObsNoise^2}$ and $\beta_\iterIdx:=\bound+\sigma_\obsNoise\sqrt{2(\mig_{t-1}+1+\log(1/\delta))}$ leads to the same bounds as Theorem 3 in \citet{Chowdhury2017} for the uncertain-inputs cumulative regret of the algorithm. Namely, we have that:
\begin{equation}
\prob{\uRegret_\nIterations \in \set{O}\left(\bound\sqrt{\nIterations\mig_\nIterations}+\sigma_\obsNoise\sqrt{\nIterations(\mig_\nIterations+\log(1/\delta))}\right)} \geq 1 - \delta~.
\end{equation}
\end{theorem}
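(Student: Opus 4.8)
The plan is to recognise this theorem as a \emph{reduction}: running IGP-UCB in the uncertain-inputs setting is, statistically, indistinguishable from running standard IGP-UCB to optimise the auxiliary function $g(\location):=\expectation_{\pMeasure_\location^\qry}[f]$ under a suitably rescaled sub-Gaussian noise. Once this identification is made, the cumulative uncertain-inputs regret $\uRegret_\nIterations=\sum_{\iterIdx=1}^{\nIterations}\uregret_\iterIdx$ coincides with the ordinary cumulative regret of optimising $g$, so that Theorem~3 of \citet{Chowdhury2017} applies verbatim and delivers the stated bound.

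First I would fix $g(\location):=\expectation_{\pMeasure_\location^\qry}[f]$ and invoke Assumption~\ref{thr:bo-g} to place $g$ in $\Hspace_k$ with $\norm{g}_k\leq\bound$. Then I would decompose each observation collected by the algorithm, using the definition $\obsNoise_\iterIdx=\observation_\iterIdx-\expectation[f(\random{\location}_\iterIdx)|\location_\iterIdx]$ introduced above, as
\begin{equation}
\observation_\iterIdx = f(\random{\location}_\iterIdx)+\uObsNoise_\iterIdx = g(\location_\iterIdx) + \obsNoise_\iterIdx, \qquad \obsNoise_\iterIdx := \Delta f_{\pMeasure_{\location_\iterIdx}^\qry}+\uObsNoise_\iterIdx.
\end{equation}
This rewriting exhibits $\{(\location_\iterIdx,\observation_\iterIdx)\}$ as noisy point evaluations of the single RKHS element $g$, which is exactly the input format IGP-UCB expects. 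Because $g\in\Hspace_k$, the posterior mean and variance IGP-UCB computes are precisely those of a $\gp(0,k)$ regression targeting $g$, and the uncertain-inputs instantaneous regret reduces to the ordinary one, $\uregret_\iterIdx=\max_{\location\in\Sspace}g(\location)-g(\location_\iterIdx)$. Note that the information-gain term $\mig_\nIterations$ in the cited theorem is built from the same kernel $k$ governing $g$, so it transfers without modification.

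The hard part will be verifying that the aggregate noise $\obsNoise_\iterIdx$ meets the conditional sub-Gaussian hypothesis required by \citet{Chowdhury2017}, namely $\expectation[e^{\lambda\obsNoise_\iterIdx}|\filtration_{\iterIdx-1}]\leq e^{\lambda^2\sigma_\obsNoise^2/2}$ with $\sigma_\obsNoise=\sqrt{\sigma_\qry^2+\sigma_\uObsNoise^2}$. Here I would argue as follows. The target $\location_\iterIdx$ is $\filtration_{\iterIdx-1}$-measurable, so conditioning on $\filtration_{\iterIdx-1}$ fixes the query distribution $\pMeasure_{\location_\iterIdx}^\qry$; Assumption~\ref{thr:bo-sg} then bounds the conditional moment generating function of $\Delta f_{\pMeasure_{\location_\iterIdx}^\qry}$ by $e^{\lambda^2\sigma_\qry^2/2}$, while Assumption~\ref{thr:bo-n} supplies the bound $e^{\lambda^2\sigma_\uObsNoise^2/2}$ for $\uObsNoise_\iterIdx$. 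Exploiting that $\uObsNoise_\iterIdx$ is independent of $\random{\location}_\iterIdx$, a nested conditioning (tower) argument—first on the realised $\random{\location}_\iterIdx$, then on $\filtration_{\iterIdx-1}$—multiplies the two bounds, yielding the combined parameter $\sigma_\obsNoise$. With this sub-Gaussian constant identified and $\norm{g}_k\leq\bound$ in hand, the stated choice of $\beta_\iterIdx$ matches the exploration parameter in \citet{Chowdhury2017}, and invoking their Theorem~3 completes the high-probability bound on $\uRegret_\nIterations$.
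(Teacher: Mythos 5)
Your proposal is correct and follows essentially the same route as the paper's proof: both decompose the effective noise as $\obsNoise_\iterIdx = \uObsNoise_\iterIdx + \Delta f_{\pMeasure_{\location_\iterIdx}^\qry}$, use the $\filtration_{\iterIdx-1}$-measurability of $\location_\iterIdx$ together with conditional independence to multiply the two sub-Gaussian moment-generating-function bounds and obtain $\sigma_\obsNoise = \sqrt{\sigma_\qry^2+\sigma_\uObsNoise^2}$, and then apply Theorem~3 of \citet{Chowdhury2017} to $g$, identifying the cumulative regret for $g$ with the uncertain-inputs cumulative regret for $f$. Your tower-property phrasing of the independence step is a marginally more explicit version of the same argument the paper gives.
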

\begin{proof}[Proof sketch]
Considering Theorem 3 in \citet{Chowdhury2017}, the proof follows almost immediately from the assumptions above. The only detail to notice is that $\obsNoise_\iterIdx:=\observation_\iterIdx-g(\location_\iterIdx) = \uObsNoise_\iterIdx+f(\random{\location}_\iterIdx)-\expectation[f(\random{\location}_\iterIdx)|\location_\iterIdx]=\uObsNoise_\iterIdx+\Delta f_{\pMeasure^\qry_{\location_\iterIdx}}$, which is a $\sigma_\obsNoise$-sub-Gaussian random variable for $\sigma_\obsNoise^2=\sigma_\uObsNoise^2+\sigma_\qry^2$.
\end{proof}

The result above states that, as long as $\sigma_\obsNoise$ is large enough to accommodate for the additional variance in the observations due to noisy-inputs, IGP-UCB maintains bounded regret. Theoretical results bounding the growth of $\mig_{\nIterations}$ are available in the literature. For the squared-exponential kernel on $\R^\locDim$, for example, $\mig_{\nIterations}\in \set{O}((\log\nIterations)^{\locDim+1})$ \citep[Thr. 5]{Srinivas2010}, so that IGP-UCB obtains asymptotically vanishing uncertain-inputs regret in this case.  However, it is possible that the resulting $\sigma_\obsNoise$ makes $\beta_\iterIdx$ impractically large, leading to excessive exploration. The following result addresses these issues.

\begin{proposition}[restate=subgnoise,name={}]\label{thr:subg}
Let $k: \R^d \times \R^d \to \R$ be an at least twice-differentiable positive-definite kernel with finite $\Lipschitz_k^2 \geq \underset{\location \in \R^d}{\sup}\underset{i\in[d]}{\sup}\frac{\partial^2 k(\location,\location')}{\partial\slocation_i\partial\slocation'_i}\big\lvert_{\location=\location'}$. Then, for $\pMeasure \in \Pspace$ and $\random{\location} \sim \pMeasure$, we have that $\Delta f_\pMeasure := f(\random{\location}) - \expectation_{\pMeasure}[f(\random{\location})]$ is $\sigma_F$-sub-Gaussian with:
\begin{enumerate}
\item $\sigma_F=\norm{f}_k \Lipschitz_k \tr(\mat{\Sigma})^{1/2}$, if $\pMeasure$ is Gaussian with covariance matrix $\mat{\Sigma}$;\label{thr:subg-g}
\item  $\sigma_F = \frac{1}{2}\norm{f}_k \Lipschitz_k \sqrt{\sum_{i=1}^{d} \sigma_i^2}$, if $\pMeasure$ has compact support, with $|\tilde{\slocation}_i - \hat{\slocation}_i| \leq \frac{1}{2}\sigma_i$ for each coordinate $i$, where $\vec{\hat{\slocation}}=\expectation_\pMeasure[\random{\location}]$.\label{thr:subg-b}
\end{enumerate}
\end{proposition}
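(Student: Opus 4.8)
The plan is to reduce the claim to the concentration of a Lipschitz function of a random vector about its mean, after first extracting a uniform pointwise gradient bound for $f$ from the RKHS structure. First I would establish that every $f\in\Hspace_k$ has uniformly bounded partial derivatives. By the derivative-reproducing property of twice-differentiable kernels, $\frac{\partial f}{\partial\slocation_i}$ exists, belongs to $\Hspace_k$, and satisfies $\frac{\partial f}{\partial\slocation_i}(\location)=\flexinner{f,\frac{\partial}{\partial\slocation_i}k(\cdot,\location)}_k$. Since $\flexnorm{\frac{\partial}{\partial\slocation_i}k(\cdot,\location)}_k^2=\frac{\partial^2 k(\location,\location')}{\partial\slocation_i\partial\slocation'_i}\big|_{\location=\location'}\leq\Lipschitz_k^2$, Cauchy--Schwarz yields $\big|\frac{\partial f}{\partial\slocation_i}(\location)\big|\leq\norm{f}_k\Lipschitz_k$ for every coordinate $i$ and every $\location$. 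This single bound drives both cases.

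For the Gaussian case I would write $\random{\location}=\vec{\hat{\slocation}}+\mat{\Sigma}^{1/2}\vec{z}$ with $\vec{z}\sim\normal(\vec{0},\eye)$ and apply the classical concentration inequality for Lipschitz functions of a standard Gaussian to $h(\vec{z}):=f(\vec{\hat{\slocation}}+\mat{\Sigma}^{1/2}\vec{z})$. This gives that $\Delta f_\pMeasure=h(\vec{z})-\expectation[h(\vec{z})]$ is $\sigma_F$-sub-Gaussian with $\sigma_F^2=\sup_{\vec{z}}\norm{\nabla h(\vec{z})}^2\leq\sup_{\location}\nabla f(\location)^\transpose\mat{\Sigma}\nabla f(\location)$. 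The remaining task is to bound this supremum by $\norm{f}_k^2\Lipschitz_k^2\tr(\mat{\Sigma})$. Diagonalising $\mat{\Sigma}$ and bounding each directional derivative of $f$ by $\norm{f}_k$ times the matching directional norm of the kernel derivative reduces the quantity to $\norm{f}_k^2\tr(\mat{M}\mat{\Sigma})$, where $\mat{M}(\location)$ collects the mixed second derivatives of $k$ on the diagonal. I expect the step $\tr(\mat{M}\mat{\Sigma})\leq\Lipschitz_k^2\tr(\mat{\Sigma})$ to be the main obstacle: the hypothesis controls only the diagonal entries $\mat{M}_{ii}\leq\Lipschitz_k^2$, whereas the trace against a correlated $\mat{\Sigma}$ also sees the off-diagonal entries of $\mat{M}$. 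I would close the gap either by specialising to the radial kernels of interest, for which $\mat{M}(\location)=\Lipschitz_k^2\eye$ on the diagonal and the inequality is an equality, or by noting that when $\mat{\Sigma}$ is diagonal only the controlled diagonal of $\mat{M}$ enters.

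For the compact-support case I would instead invoke the bounded-differences (McDiarmid) inequality. Because $\big|\frac{\partial f}{\partial\slocation_i}\big|\leq\norm{f}_k\Lipschitz_k$ and each coordinate of $\random{\location}$ ranges over an interval of length at most $\sigma_i$, altering a single coordinate changes $f$ by at most $c_i:=\norm{f}_k\Lipschitz_k\sigma_i$. The bounded-differences inequality for independent coordinates then shows that $\Delta f_\pMeasure$ is $\frac{1}{2}\sqrt{\sum_i c_i^2}$-sub-Gaussian, that is $\sigma_F=\frac{1}{2}\norm{f}_k\Lipschitz_k\sqrt{\sum_{i=1}^{\locDim}\sigma_i^2}$, as claimed; the $\ell_2$ rather than $\ell_1$ aggregation of the coordinate ranges is precisely what the bounded-differences argument gains over a crude Hoeffding bound applied to $f(\random{\location})$ directly.
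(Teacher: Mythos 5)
Your proposal follows essentially the same path as the paper's own proof: the coordinate-wise bound $|\partial f(\location)/\partial\slocation_i|\leq\norm{f}_k\Lipschitz_k$ obtained from the derivative-reproducing property (the paper quotes this as Corollary 4.36 of Steinwart and Christmann), Gaussian Lipschitz concentration (Theorem 5.5 of Boucheron et al.) applied to the affine pullback $\vec{z}\mapsto f(\locMean+\mat{A}\vec{z})$ with $\mat{A}\mat{A}^\transpose=\mat{\Sigma}$ for part 1, and McDiarmid's bounded-differences inequality with $c_i=\norm{f}_k\Lipschitz_k\sigma_i$ for part 2, with identical constants throughout.

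The one place you depart from the paper is the most interesting: the obstacle you flag for correlated $\mat{\Sigma}$ is genuine, and the paper does not truly dispose of it. The paper treats non-diagonal $\mat{\Sigma}$ by diagonalising, $\mat{\Sigma}=\mat{V}\bm{\Lambda}\mat{V}^\transpose$, setting $h(\vec{z})=f(\locMean+\mat{V}\vec{z})$, and asserting that $f$ and $h$ have the same Lipschitz constant because $\mat{V}$ is orthogonal. That is true of the Euclidean Lipschitz constant, but what the diagonal-case computation actually consumes is the coordinate-wise bound $|\partial h/\partial z_i|\leq\norm{f}_k\Lipschitz_k$; and $\partial h/\partial z_i=\vec{v}_i^\transpose\nabla f$ is a directional derivative of $f$ along an eigenvector of $\mat{\Sigma}$, which is controlled by $\vec{v}_i^\transpose\mat{M}(\location)\vec{v}_i$, with $\mat{M}$ your matrix of mixed second derivatives of $k$ --- exactly the off-diagonal information the hypothesis does not provide. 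So the paper's rotation step implicitly assumes that $\mat{M}$ has operator norm at most $\Lipschitz_k^2$, which is your first remedy (it holds for radial kernels such as the squared exponential used in the paper's experiments, where $\mat{M}\propto\eye$, and more generally whenever $\mat{M}$ is diagonal); your second remedy, diagonal $\mat{\Sigma}$, is the case the paper proves rigorously. In this respect your write-up is more careful than the paper's. A final caveat common to both: McDiarmid's inequality requires independent coordinates, an assumption absent from the proposition's statement of part 2; you at least make it explicit, the paper does not.
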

\begin{proof}[Proof sketch]
These results are derived from concentration inequalities available for random variables which are Lipschitz-continuous functions of Gaussian or bounded random variables. For the given kernel, any $f \in \Hspace_k$ is $\norm{f}_k\Lipschitz_k$-Lipschitz continuous.
\end{proof}
\autoref{thr:subg} says that the second condition in \autoref{thr:bo} is met if the execution noise is uniformly bounded or Gaussian. What remains is to verify whether the first assumption in \autoref{thr:bo} can be met.

When working with kernel embeddings of conditional distributions, the assumption that $\location\mapsto\expectation[f(\random{\location})|\location]$ is an element of $\Hspace_k$ is known to be met when the domain $\locDomain$ is discrete, while not necessarily holding for continuous domains \citep{Muandet2016}. As most interesting problems involving uncertain inputs have continuous domains, the following result presents a case where \autoref{thr:bo}'s first assumption holds.

\begin{restatable}{proposition}{thrnoisyf}
\label{thr:noisy-f}
Let $\location\mapsto\pMeasure_{\location}$ be a mapping such that, for any $\location\in\Sspace\subset\locDomain$, $\random{\location} \sim \pMeasure_{\location}\in\Pspace$ is decomposable as $\random{\location} = \location+\locNoise$, where $\locNoise$ is independent and identically distributed, i.e. $\locNoise\sim\pMeasure_\locNoise\in\Pspace$. Assume that $k$ is translation invariant. Then we have that, for any $f\in\Hspace_k$, the mapping $\location\mapsto\expectation_{\pMeasure_\location}[f]$ defines a function $g\in\Hspace_k(\Sspace)$, and $\norm{g}_k \leq \norm{f}_k$.
\end{restatable}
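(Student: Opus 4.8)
The plan is to rewrite the target function explicitly and recognise it as an \emph{average of translates} of $f$ living inside $\Hspace_k$. Writing $\random{\location}=\location+\locNoise$ with $\locNoise\sim\pMeasure_\locNoise$ independent of $\location$, the candidate function is
\begin{equation}
g(\location) = \expectation_{\pMeasure_\location}[f] = \int_{\locDomain} f(\location+\locNoise)\diff\pMeasure_\locNoise(\locNoise)~.
\end{equation}
First I would introduce the translation operator $T_{\vec v}$ acting on functions by $(T_{\vec v}f)(\location):=f(\location+\vec v)$, so that $g(\location)=\int (T_{\locNoise}f)(\location)\diff\pMeasure_\locNoise(\locNoise)$, and then show that translation invariance of $k$ makes each $T_{\vec v}$ a norm-preserving (unitary) operator on $\Hspace_k$. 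Once that is in place, $g$ can be read as an $\Hspace_k$-valued average of $f$, and the norm bound follows from convexity of the norm.

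The key step is the isometry claim, which I would verify on the dense span of reproducing-kernel elements and extend by continuity. For any $\location'\in\locDomain$, translation invariance gives $T_{\vec v}\,k(\cdot,\location')=k(\cdot,\location'-\vec v)$, which is again a reproducing-kernel element, and
\begin{equation}
\inner{T_{\vec v}k(\cdot,\location),T_{\vec v}k(\cdot,\location')}_k = k(\location-\vec v,\location'-\vec v) = k(\location,\location')~.
\end{equation}
Hence $T_{\vec v}$ preserves inner products on $\lspan\{k(\cdot,\location'):\location'\in\locDomain\}$, which is dense in $\Hspace_k$, so it extends to a unitary operator with inverse $T_{-\vec v}$; in particular $\norm{T_{\vec v}f}_k=\norm{f}_k$ for every $f\in\Hspace_k$ and every shift $\vec v$, and point evaluations of the extension agree with $\location\mapsto f(\location+\vec v)$ by continuity of the evaluation functionals.

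Next I would define $g$ as the Bochner integral $g:=\int T_{\locNoise}f\,\diff\pMeasure_\locNoise(\locNoise)\in\Hspace_k$. This is well defined because $\Hspace_k$ is separable (continuous, bounded kernel on a separable domain), the map $\locNoise\mapsto T_{\locNoise}f$ is strongly measurable (its pairing with each $k(\cdot,\location)$ is the measurable map $\locNoise\mapsto f(\location+\locNoise)$), and its norm is the constant $\norm{f}_k$, so $\int\norm{T_{\locNoise}f}_k\diff\pMeasure_\locNoise=\norm{f}_k<\infty$ as $\pMeasure_\locNoise$ is a probability measure. Applying the reproducing property under the integral confirms $g(\location)=\inner{g,k(\cdot,\location)}_k=\int f(\location+\locNoise)\diff\pMeasure_\locNoise$, i.e. $g=\expectation_{\pMeasure_\location}[f]$ as required, while the triangle (Jensen) inequality for the norm gives
\begin{equation}
\norm{g}_k=\flexnorm{\int T_{\locNoise}f\,\diff\pMeasure_\locNoise}_k \leq \int \norm{T_{\locNoise}f}_k\diff\pMeasure_\locNoise = \norm{f}_k~.
\end{equation}
Finally, restricting $g$ to $\Sspace$ only decreases the RKHS norm, so $g\in\Hspace_k(\Sspace)$ with $\norm{g}_k\leq\norm{f}_k$.

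I expect the main obstacle to be the measure-theoretic bookkeeping around the Hilbert-space-valued integral rather than any deep idea: one must confirm strong measurability and integrability of $\locNoise\mapsto T_{\locNoise}f$, and ensure the translates $\location+\locNoise$ remain in the domain on which $f$ and $k$ are defined (handled most cleanly by taking $\locDomain=\R^\locDim$ and restricting to $\Sspace$ only at the end). A slicker alternative that sidesteps the Bochner integral is to note $g(\location)=\inner{\meanMap_{\pMeasure_\location},f}_k$ and expand $\meanMap_{\pMeasure_\location}=\int k(\cdot,\location+\locNoise)\diff\pMeasure_\locNoise$, reaching the same conclusion through unitarity of the shift acting on the first kernel argument; I would keep this as a cross-check on sign conventions.
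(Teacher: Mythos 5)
Your proposal is correct and follows essentially the same route as the paper's proof: both transfer the shift from the argument of $f$ to $f$ itself (your $T_{\locNoise}f$ is the paper's $f^{\locNoise}$), verify norm preservation on the dense span of kernel sections using translation invariance, extend by bounded linear extension/continuity, realise $g$ as a Bochner integral of the translates, and obtain $\norm{g}_k \leq \norm{f}_k$ from the boundedness of the Bochner integral followed by domain restriction. The only cosmetic differences are your explicit framing of the shift as a unitary operator and the Pettis-style measurability argument, which the paper handles by citation.
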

\begin{proof}[Proof sketch]
The proof follows by interpreting $\locNoise$ as a random translation on $f$. Since the kernel is translation invariant, the norm of any $\locNoise$-shifted function $f^\locNoise$ is equivalent to the norm of the original $f$. Then picking $g$ as the restriction of $\expectation_{\pMeasure_\locNoise}[f^\locNoise]\in\Hspace_k$ to $\Sspace\subset\locDomain$ leads to the conclusion.
\end{proof}

\autoref{thr:noisy-f} implies that \autoref{thr:bo} is applicable whenever the execution noise is independent and identically distributed and $k$ is translation-invariant, such as the squared exponential and other popular kernels. However, in cases where the execution noise distribution changes significantly from target to target, algorithms such as uGP-UCB can yield better results.

\subsection{Bounding the regret of uGP-UCB}
\label{sec:ugp-ucb-bounds}
In this section, we analyse the case when uGP-UCB has no access to location estimates $\pMeasure^\loc_\iterIdx$ and uses instead $\Px[\location_\iterIdx]$ with the observations $\dataset_\nIterations=\{\Px[\location_\iterIdx],\observation_{\iterIdx}\}_{\iterIdx=1}^\nIterations$. We will firstly consider an ideal setting, where $\Px = \pMeasure^\qry_\location$, $\forall\location\in\Sspace$, and then a non-ideal scenario. Recall that the regret bounds presented so far depend on the maximum information gain $\mig_\nIterations$. As an analogy, in the case of uGP-UCB, given any $\{\pMeasure_\iterIdx\}_{\iterIdx=1}^\nIterations\subset\Pspace$, we have:
\begin{equation}
\mutinfo{\observations_{\nIterations};\vec{\uncertain{f}}_\nIterations|\{\pMeasure_\iterIdx\}_{\iterIdx=1}^\nIterations} = \frac{1}{2} \log |\eye + \lambda^{-1}\Kp_\nIterations|~,
\label{eq:uibo-ig}
\end{equation}
where $[\Kp_\nIterations]_{ij} = \kp(\pMeasure_i,\pMeasure_j), ~i,j\in\{1,\dots,\nIterations\}$. Let's assume an arbitrary set $\Pspace_s \subset \Pspace$ containing either the query model or the estimated location distributions. As the set $\Pspace_s$ is not necessarily compact, a maximum for $\mutinfo{{\observations}_\nIterations;\vec{\uncertain{f}}_\nIterations|\pSet}$ may not correspond to a given set $\pSet\subset\Pspace_s$. However, we can always define:
\begin{equation}
\uncertain{\gamma}_\nIterations(\Pspace_s) := \sup_{\pSet \subset \Pspace_s: |\pSet| = \nIterations} ~ \mutinfo{{\observations}_\nIterations;\vec{\uncertain{f}}_\nIterations\mid\pSet} ~,
\label{eq:uibo-ig-sup}
\end{equation}
The results presented next will use $\uncertain{\mig}^\qry_\nIterations:=\uncertain{\mig}_\nIterations(\Pspace_\qry)$, where $\Pspace_\qry\subset\Pspace$ is the image of $\Sspace$ under the mapping $\location\mapsto\pMeasure^\qry_\location$. Considering these definitions, the following bounds the uncertain-inputs regret of uGP-UCB.

\begin{theorem}[name=uGP-UCB regret,restate=thrmain]
\label{thr:main}
Let $\delta \in (0,1)$, $f \in \Hspace_k$, and $\bound \geq \lVert f \rVert_{k}$. Consider $\uObsNoise_\iterIdx$ as $\sigma_\uObsNoise$-sub-Gaussian noise. Assume that both $k$ and $\pMeasure_\location^\qry$ satisfy the conditions for $\Delta f_{\pMeasure^\qry_\location}$ to be $\sigma_\qry$-sub-Gaussian, for a given $\sigma_\qry > 0$, for all $\iterIdx \geq 1$. Then, running uGP-UCB with:
\begin{equation}
\beta_\iterIdx = \bound + \sigma_\obsNoise\sqrt{2(\mutinfo{\observations_{\iterIdx-1};\vec{\uncertain{f}}_{\iterIdx-1} | \{\pMeasure^\qry_{\location_\primIdx}\}_{\primIdx=1}^{\iterIdx-1}}+1+\log(1/\delta))}~,
\end{equation}
where $\sigma_\obsNoise:=\sqrt{\sigma_\qry^2 + \sigma_\uObsNoise^2}$, the uncertain-inputs cumulative regret satisfies:
\begin{equation}
\uRegret_\nIterations \in \set{O}\left(\sqrt{\nIterations\uncertain{\mig}_\nIterations^\qry} \left(\bound + \sqrt{\uncertain{\mig}_\nIterations^\qry + \log(1/\delta)}\right) \right)
\label{eq:ugp-ucb-regret-ideal}
\end{equation}
with probability at least $1-\delta$.
\end{theorem}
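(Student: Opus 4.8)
The plan is to show that, in the ideal setting $\Px=\pMeasure^\qry_\location$, the uncertain-inputs problem is \emph{exactly} a standard RKHS-bandit problem for the single function $\uncertain{f}$ over the set $\Pspace_\qry$, and then to transfer the analysis of \citet{Chowdhury2017} with $\kp$ and $\Pspace_\qry$ playing the roles of $k$ and $\Sspace$. First I would apply \autoref{thr:exp-f} to replace $f\in\Hspace_k$ by its isometric image $\uncertain{f}\in\Hspace_{\kp}$, so that $\uncertain{f}(\pMeasure)=\expectation_\pMeasure[f]$ and, crucially, $\norm{\uncertain{f}}_{\kp}=\norm{f}_k\leq\bound$; this last identity is what lets the same norm bound $\bound$ enter $\beta_\iterIdx$. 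Writing $\pMeasure_\iterIdx:=\pMeasure^\qry_{\location_\iterIdx}$ and $\pMeasure^*:=\pMeasure^\qry_{\location^*}$, the uncertain-inputs regret of \autoref{eq:uregret} becomes the instantaneous regret $\uregret_\iterIdx=\uncertain{f}(\pMeasure^*)-\uncertain{f}(\pMeasure_\iterIdx)$ of $\uncertain{f}$ evaluated at points of $\Pspace_\qry$. Since $\pMeasure^*\in\Pspace_\qry$ and uGP-UCB selects $\location_\iterIdx$ by maximising the acquisition function of \autoref{eq:ugp-ucb} over precisely this image set, the optimism argument will go through unchanged.

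The one genuinely new ingredient is the effective observation noise. As in \autoref{thr:bo}, I would set $\obsNoise_\iterIdx:=\observation_\iterIdx-\uncertain{f}(\pMeasure_\iterIdx)=\uObsNoise_\iterIdx+\Delta f_{\pMeasure_\iterIdx}$, using $\uncertain{f}(\pMeasure_\iterIdx)=\expectation[f(\random{\location}_\iterIdx)\mid\location_\iterIdx]$. Conditioning on the history $\filtration_{\iterIdx-1}$ renders $\location_\iterIdx$, and hence $\pMeasure_\iterIdx$, deterministic, so the theorem's hypothesis gives that $\Delta f_{\pMeasure_\iterIdx}$ is conditionally $\sigma_\qry$-sub-Gaussian, while $\uObsNoise_\iterIdx$ is conditionally $\sigma_\uObsNoise$-sub-Gaussian and independent of the query realisation. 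Because the two conditional moment-generating functions then multiply, $\obsNoise_\iterIdx$ is conditionally $\sigma_\obsNoise$-sub-Gaussian with $\sigma_\obsNoise=\sqrt{\sigma_\qry^2+\sigma_\uObsNoise^2}$, which is exactly the noise regularity required to run the self-normalised analysis.

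Next I would invoke the self-normalised concentration result of \citet{Chowdhury2017}, now for the posterior of $\gp(0,\kp)$. With the stated $\beta_\iterIdx$, whose information term is the exact $\mutinfo{\observations_{\iterIdx-1};\vec{\uncertain{f}}_{\iterIdx-1}\mid\{\pMeasure_\primIdx\}_{\primIdx=1}^{\iterIdx-1}}=\frac{1}{2}\log|\eye+\lambda^{-1}\Kp_{\iterIdx-1}|$ at the queried distributions, one obtains with probability at least $1-\delta$ the uniform band $|\uncertain{\gpMean}_{\iterIdx-1}(\pMeasure)-\uncertain{f}(\pMeasure)|\leq\beta_\iterIdx\uncertain{\sigma}_{\iterIdx-1}(\pMeasure)$, holding for all $\iterIdx$ and all $\pMeasure\in\Pspace_\qry$ simultaneously. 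On this event the standard optimism-plus-telescoping step applies: optimism gives $\uncertain{f}(\pMeasure^*)\leq\af(\pMeasure^*\mid\dataset_{\iterIdx-1})\leq\af(\pMeasure_\iterIdx\mid\dataset_{\iterIdx-1})$, the lower band yields $\uregret_\iterIdx\leq 2\beta_\iterIdx\uncertain{\sigma}_{\iterIdx-1}(\pMeasure_\iterIdx)$, and summing with Cauchy--Schwarz together with $\sum_{\iterIdx=1}^\nIterations\uncertain{\sigma}_{\iterIdx-1}^2(\pMeasure_\iterIdx)\in\set{O}(\uncertain{\mig}^\qry_\nIterations)$ produces \autoref{eq:ugp-ucb-regret-ideal}; here I also bound the per-round information term inside $\beta_\iterIdx$ by $\uncertain{\mig}^\qry_\nIterations$ to reach the stated order.

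The point that requires care, rather than a deep obstacle, is that $\Pspace_\qry$ need not be compact, so one cannot use a maximiser of the information gain over a fixed-size subset. This is precisely why the supremum $\uncertain{\mig}^\qry_\nIterations$ of \autoref{eq:uibo-ig-sup} is used: the queried distributions $\{\pMeasure_\iterIdx\}_{\iterIdx=1}^\nIterations$ always form a finite subset of $\Pspace_\qry$, so their mutual information is dominated by $\uncertain{\mig}^\qry_\nIterations$ by definition, and this domination is all the summation step needs. No compactness of the distribution domain is then required, and the remaining estimates are identical to those in \citet{Chowdhury2017}.
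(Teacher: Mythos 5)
Your proposal is correct, but it takes a genuinely different route from the paper's proof. You run the analysis of \citet{Chowdhury2017} directly on the distribution-valued arm set $\Pspace_\qry$ with kernel $\kp$: the isometry of \autoref{thr:exp-f} supplies $\norm{\uncertain{f}}_{\kp}=\norm{f}_k\leq\bound$, the noise composition $\obsNoise_\iterIdx=\uObsNoise_\iterIdx+\Delta f_{\pMeasure^\qry_{\location_\iterIdx}}$ is handled exactly as in \autoref{thr:bo}, and the non-compactness of $\Pspace_\qry$ is absorbed by the supremum definition of $\uncertain{\mig}^\qry_\nIterations$, which only ever needs to dominate the information gain of the finitely many queried distributions. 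The paper instead pulls the problem back to the compact Euclidean search space: it defines the pullback kernel $\kp\circ\measure{q}$ on $\Sspace$, invokes the pullback-RKHS theorem of Saitoh (\autoref{thr:rkhs-composition}) to obtain $g:=\uncertain{f}\circ\measure{q}\in\Hspace_{\kp\circ\measure{q}}$ with $\norm{g}_{\kp\circ\measure{q}}=\norm{\projection\uncertain{f}}_{\kp}\leq\norm{\uncertain{f}}_{\kp}\leq\bound$, and then applies \autoref{thr:sbo} essentially as a black box, identifying $\uncertain{\mig}^\qry_\nIterations$ with the maximum information gain of $\gp(0,\kp\circ\measure{q})$ on $\Sspace$. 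The trade-off is this: the paper's reduction keeps the cited theorem applicable exactly as stated (for a compact $\Sspace\subset\R^\locDim$), at the price of the extra pullback machinery and of having to unpack \citeauthor{Chowdhury2017}'s Appendix C to justify placing the running information gain, rather than $\uncertain{\mig}^\qry_{\iterIdx-1}$, inside $\beta_\iterIdx$; your direct route makes both of these points transparent, since the self-normalised confidence bound natively involves $\frac{1}{2}\log|\eye+\lambda^{-1}\Kp_{\iterIdx-1}|$ at the queried points, but it rests on the claim that \citeauthor{Chowdhury2017}'s argument nowhere uses compactness or the Euclidean structure of the arm set. That claim is true, yet it concerns the internals of their proof rather than its statement, so a complete write-up should say so explicitly and check the one hypothesis that does carry over nontrivially, namely the kernel bound $\kp(\pMeasure,\pMeasure)=\norm{\meanMap_\pMeasure}_k^2\leq 1$, which follows from $k\leq 1$ on $\locDomain\times\locDomain$.
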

\begin{proof}[Proof sketch]
This theorem applies the fact that $\kp_\qry(\location,\location'):=\kp(\pMeasure^\qry_\location,\pMeasure^\qry_{\location'})$, for $\location,\location'\in\Sspace$, defines a positive-definite kernel on $\Sspace$ \citep[Lem. 4.3]{Steinwart2008}. By \autoref{thr:exp-f}, we have that $\uncertain{f}\in\Hspace_{\kp}$ and $\uncertain{f}(\pMeasure^\qry_\location)=\expectation[f(\random{\location})|\location]$, for $\random{\location}|\location\sim\pMeasure^\qry_\location$. Then it follows that $g:\location\mapsto\uncertain{f}(\pMeasure^\qry_\location)$ is in $\Hspace_{\kp_\qry}$. As $\uncertain{\mig}^\qry_\nIterations$  is the maximum information gain of a model $\gp(0,\kp_\qry)$, the rest follows from  \autoref{thr:bo}.
\end{proof}

\autoref{thr:main} states that uGP-UCB obtains similar bounds for the uncertain-inputs regret to those of IGP-UCB. However, notice that $\uncertain{\mig}^\qry_\nIterations$, instead of $\mig_{\nIterations}$, appears in \autoref{eq:ugp-ucb-regret-ideal}. The next result shows that $\uncertain{\mig}^\qry_\nIterations\leq\mig_{\nIterations}$, which means smaller regret bounds, in the \iid execution noise case considered previously (\autoref{thr:noisy-f}).

\begin{restatable}{proposition}{thrigiid}
\label{thr:ig-iid}
Consider a compact set $\Sspace\subset\locDomain$, a distribution $\pMeasure_{\locNoise} \in \Pspace$, with $\expectation_{\pMeasure_{\locNoise}}[\locNoise]=0$, and a set:
\begin{equation}
\Pspace_{\locNoise} := \{\pMeasure\in\Pspace\mid \random{\location}=\locMean+\locNoise,\, \locMean\in\Sspace,\locNoise\sim\pMeasure_\locNoise,\,\random{\location}\sim\pMeasure\}\,,
\end{equation}
which is the set of location distributions with mean in $\Sspace$ and affected by \iid $\pMeasure_{\locNoise}$-noise. Assume that $k:\locDomain\times\locDomain\to\R$ is translation invariant, and let $\kp:\Pspace\times\Pspace\to\R$ be defined according to \autoref{eq:ugp-k}. Then we have that:
\begin{equation}
\forall \nIterations \geq 1,\quad \uncertain{\gamma}_\nIterations(\Pspace_\locNoise) \leq \gamma_\nIterations~,
\end{equation}
where $\uncertain{\gamma}_\nIterations$ is defined by \autoref{eq:uibo-ig-sup}, and $\gamma_{\nIterations}$ is the maximum information gain for $\gp(0,k)$.
\end{restatable}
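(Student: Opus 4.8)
The plan is to show that restricting $\kp$ to the distributions in $\Pspace_\locNoise$ yields a translation-invariant kernel on the mean locations whose kernel matrices are dominated, in the positive-semidefinite order, by those of $k$ itself; the information-gain bound then follows from monotonicity of the log-determinant. First I would parametrise each $\pMeasure\in\Pspace_\locNoise$ by its mean $\locMean\in\Sspace$: with the noise law $\pMeasure_\locNoise$ fixed and $\expectation_{\pMeasure_\locNoise}[\locNoise]=0$, the law of $\random{\location}=\locMean+\locNoise$ is determined by $\locMean$, so distinct distributions in $\Pspace_\locNoise$ have distinct means. Writing $k(\location,\location')=\kappa(\location-\location')$ by translation invariance and expanding \autoref{eq:ugp-k} gives
\begin{equation}
\kp(\pMeasure_{\locMean},\pMeasure_{\locMean'}) = \expectation_{\locNoise,\locNoise'}\bigl[\kappa(\locMean-\locMean'+\locNoise-\locNoise')\bigr] =: \tilde{k}(\locMean,\locMean'),
\end{equation}
with $\locNoise,\locNoise'$ independent copies from $\pMeasure_\locNoise$. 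Thus $\tilde k$ is again translation invariant, and for any finite $\pSet\subset\Pspace_\locNoise$ with $|\pSet|=\nIterations$ the matrix $\Kp_\nIterations$ coincides with the Gram matrix of $\tilde k$ at the corresponding mean locations $\{\locMean_\primIdx\}\subset\Sspace$.

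The central step is to compare $\tilde k$ with $k$ through Bochner's theorem. Since $k$ is continuous, bounded and positive-definite, I would write $\kappa(\vec u)=\int e^{i\omega^\transpose\vec u}\diff\Lambda(\omega)$ for a finite non-negative spectral measure $\Lambda$. Substituting this into the expression for $\tilde k$ and applying Fubini together with the factorisation $\expectation_{\locNoise,\locNoise'}[e^{i\omega^\transpose(\locNoise-\locNoise')}]=|\varphi(\omega)|^2$, where $\varphi(\omega):=\expectation_{\pMeasure_\locNoise}[e^{i\omega^\transpose\locNoise}]$ is the characteristic function of the noise, shows that $\tilde k$ has spectral measure $|\varphi|^2\diff\Lambda$. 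Because $|\varphi(\omega)|\le 1$ for every $\omega$, the difference $k-\tilde k$ has the non-negative spectral measure $(1-|\varphi|^2)\diff\Lambda$ and is therefore positive-definite. Consequently, at the mean locations above the Gram matrices satisfy $\Kp_\nIterations = \tilde{K}_\nIterations \preceq K_\nIterations$, where $[K_\nIterations]_{ij}=k(\locMean_i,\locMean_j)$, since $K_\nIterations-\tilde{K}_\nIterations$ is the Gram matrix of the positive-definite kernel $k-\tilde k$.

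Finally, I would invoke the monotonicity of $M\mapsto\log\det(\eye+\lambda^{-1}M)$ with respect to the positive-semidefinite order. Combined with \autoref{eq:uibo-ig} this yields
\begin{equation}
\mutinfo{\observations_\nIterations;\vec{\uncertain{f}}_\nIterations\mid\pSet} = \tfrac12\log|\eye+\lambda^{-1}\Kp_\nIterations| \le \tfrac12\log|\eye+\lambda^{-1}K_\nIterations| \le \gamma_\nIterations,
\end{equation}
the last inequality holding because $\{\locMean_\primIdx\}$ is an admissible size-$\nIterations$ subset of $\Sspace$ in the maximisation defining $\gamma_\nIterations$. Taking the supremum over all $\pSet\subset\Pspace_\locNoise$ with $|\pSet|=\nIterations$ then gives $\uncertain{\gamma}_\nIterations(\Pspace_\locNoise)\le\gamma_\nIterations$.

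I expect the main obstacle to be the positive-definiteness of $k-\tilde k$: it is what drives the whole argument, and it rests essentially on the spectral representation and on the noise being identically distributed and independent across inputs, so that the cross term factors as $|\varphi|^2$. For a non-\iid or input-dependent noise law this factorisation, and hence the domination $\Kp_\nIterations\preceq K_\nIterations$, would generally fail, which is consistent with the remark following \autoref{thr:noisy-f} that uGP-UCB can outperform IGP-UCB precisely when the execution-noise distribution varies from target to target.
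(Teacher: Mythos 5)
Your proof is correct, but it takes a genuinely different route from the paper's. The paper establishes the key matrix domination $\mat{K}_\nIterations \succcurlyeq \Kp_\nIterations$ entirely inside the RKHS: it writes $\vec{\alpha}^\transpose\Kp_\nIterations\vec{\alpha} = \lVert\sum_i \alpha_i\meanMap_{\pMeasure_i}\rVert_k^2$, uses the \iid assumption to replace the independent noise variables $\locNoise_i$ by a \emph{single} common $\locNoise$ inside each embedding $\meanMap_{\pMeasure_i}=\expectation[k(\cdot,\locMean_i+\locNoise)]$, pulls the expectation outside the norm via the boundedness of the Bochner integral (a Jensen-type step), and then lets translation invariance cancel the common shift, yielding $\vec{\alpha}^\transpose\mat{K}_\nIterations\vec{\alpha}$; the conclusion follows from the same determinant monotonicity you invoke. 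You instead go through Bochner's theorem: writing $\kappa$ with spectral measure $\Lambda$ and using independence of $\locNoise,\locNoise'$ to factor the cross term as $|\varphi(\omega)|^2$, you identify the restricted kernel $\tilde k(\locMean,\locMean')=\kp(\pMeasure_{\locMean},\pMeasure_{\locMean'})$ as the translation-invariant kernel with spectral measure $|\varphi|^2\diff\Lambda$, so that $k-\tilde k$ has non-negative spectral measure $(1-|\varphi|^2)\diff\Lambda$ and is itself positive definite. Your route requires continuity of $k$ (which the paper assumes anyway, so Bochner applies), and in exchange it proves something slightly stronger and more interpretable: the domination holds at the level of kernels, not just at the particular Gram matrices, and it exposes the exact mechanism --- input noise acts as a low-pass filter $|\varphi|^2\leq 1$ on the spectrum of $k$, which also makes quantitative how the information gain shrinks as the noise grows. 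The paper's argument is more elementary in that it avoids Fourier analysis and stays within the mean-embedding machinery already developed, needing only measurability and boundedness. Both proofs isolate the \iid assumption as the crux (yours via the factorisation into $|\varphi|^2$, the paper's via the substitution of a common noise variable), and your closing remark about failure under input-dependent noise matches the paper's discussion. One implicit assumption you share with the paper: the shifts $\locMean+\locNoise$ must remain in the domain of $k$, i.e.\ effectively $\locDomain=\R^\locDim$, which is also needed for Bochner's theorem to apply verbatim.
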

\begin{proof}[Proof sketch]
One can prove that $\mat{K}_\nIterations-\Kp_\nIterations$ is positive definite for a $\mat{K}_\nIterations$ built with $\{\locMean_\iterIdx\}_{\iterIdx=1}^\nIterations\subset\Sspace$. The information gain is a function of the determinant of these matrices, so that the inequality above follows.
\end{proof}

The result above indicates that the uncertain-inputs information gain shrinks as the input noise variance grows. While that might indicate that the optimisation problem becomes easier, if one recalls \autoref{eq:eregret}, the constant $\error_\qry$ grows, making the problem harder.

What remains to verify is the effect of the approximation error between the model $\Px$ and the actual $\pMeasure_{\location}^\qry$. To minimise $\uregret_\iterIdx$, using uGP-UCB with a model $\Px\approx\pMeasure^\qry_\location$ is worth if the approximation error $\hat{\error}_\iterIdx := \max_{\location\in\Sspace}\left|\expectation_{\pMeasure_\location^\qry}[f] - \expectation_{\Px}[f]\right|$ is small. Ideally $\Px$ should be an adaptive model $\Px^\iterIdx$ that can be learnt from past data in  $\dataset_{\iterIdx-1}$ so that $\hat{\error}_\iterIdx\to0$ as $\iterIdx\to\infty$. However, considering execution noise as marginally \iid and Gaussian has been a popular approach when dealing with problems involving uncertain inputs \citep{Mchutchon2011,Nogueira2016}. In this case, we provide an upper bound on $\hat{\error}_\iterIdx$.

\begin{restatable}{proposition}{thrapprox}
\label{thr:approx}
Let $\locDomain=\R^\locDim$, $f \in \Hspace_k$ and $\norm{f}_{k}\leq\bound$. Assume that, for any $\location \in \Sspace\subset\locDomain$, the query distribution $\pMeasure_{\location}^\qry$ is Gaussian with mean $\location$ and positive-definite covariance $\mat{\Sigma}^\qry$. Then, using a Gaussian model $\Px$ with same mean and a given constant positive-definite covariance matrix $\mat{\hat{\Sigma}}$, we have that for any $\location\in\Sspace$:
\begin{equation*}
\left|\expectation_{\pMeasure_\location^\qry}[f] - \expectation_{\Px}[f]\right| \leq \frac{\bound}{2}\sqrt{\tr(\mat{\hat{\Sigma}}^{-1}\mat{\Sigma}^\qry) - \locDim + \log\frac{|\mat{\hat{\Sigma}}|}{|\mat{\Sigma}^\qry|}}\,.
\end{equation*}
\end{restatable}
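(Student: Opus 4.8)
The plan is to reduce the statement to a bound on the distance between the two mean embeddings, and then to the Kullback--Leibler divergence between the two Gaussians. First I would invoke the reproducing identity of the mean map from \autoref{eq:kme-exp}, which gives $\expectation_{\pMeasure_\location^\qry}[f] - \expectation_{\Px}[f] = \inner{\meanMap_{\pMeasure_\location^\qry} - \meanMap_{\Px}, f}_k$, so that the quantity of interest is a single inner product in $\Hspace_k$. Applying Cauchy--Schwarz together with the hypothesis $\norm{f}_k \leq \bound$ then yields
\[
\left|\expectation_{\pMeasure_\location^\qry}[f] - \expectation_{\Px}[f]\right| \leq \bound\,\norm{\meanMap_{\pMeasure_\location^\qry} - \meanMap_{\Px}}_k,
\]
where the right-hand factor is exactly the maximum mean discrepancy between the true query distribution and its Gaussian model, both centred at the same mean $\location$ by assumption.

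It remains to control this mean-embedding distance by a KL divergence. Here I would use the standing assumption $k(\location,\location)\le 1$, which forces $\norm{h}_\infty \le \norm{h}_k$ for every $h\in\Hspace_k$: the supremum defining the discrepancy thus ranges over functions bounded by one, so the discrepancy is dominated (up to a constant) by the total-variation distance $\norm{\pMeasure_\location^\qry - \Px}_{TV}$, which Pinsker's inequality in turn bounds by $\sqrt{\tfrac12\kl{\pMeasure_\location^\qry}{\Px}}$. The last ingredient is the closed form of the KL divergence between two Gaussians with a common mean, $\kl{\pMeasure_\location^\qry}{\Px} = \tfrac12\bigl[\tr(\mat{\hat{\Sigma}}^{-1}\mat{\Sigma}^\qry) - \locDim + \log(|\mat{\hat{\Sigma}}|/|\mat{\Sigma}^\qry|)\bigr]$, where the quadratic mean terms cancel precisely because $\pMeasure_\location^\qry$ and $\Px$ share the mean $\location$ -- this is what leaves the trace-plus-log-determinant expression under the square root.

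The delicate point is the constant. The crude chain through the sup-norm bound gives discrepancy $\le 2\norm{\pMeasure_\location^\qry-\Px}_{TV}$, which combined with Pinsker produces $\bound\sqrt{2\,\kl{\pMeasure_\location^\qry}{\Px}}$, a factor of two weaker than the claimed $\tfrac{\bound}{2}\sqrt{\,\cdot\,}$. Exploiting that the characteristic kernels of interest (e.g. the squared exponential) are non-negative improves this: writing $p-\hat p = (p-\hat p)_+ - (p-\hat p)_-$ and using $\inner{\feature(x),\feature(y)}=k(x,y)\ge 0$ makes the cross term non-negative and sharpens the bound to roughly $\bound\sqrt{\kl{\pMeasure_\location^\qry}{\Px}}$. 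Closing the remaining gap so as to reach the stated constant amounts to establishing the sharp inequality $\norm{\meanMap_{\pMeasure_\location^\qry}-\meanMap_{\Px}}_k^2 \le \tfrac12\kl{\pMeasure_\location^\qry}{\Px}$, and I expect this to require bounding the embedding distance using the Gaussian structure directly rather than any total-variation surrogate; that is where the real work lies.
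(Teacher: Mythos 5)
Your chain of estimates is, in substance, the same as the paper's own proof. The paper likewise reduces the problem to the $L^1$ distance between the two densities, bounding $|\expectation_{\pMeasure_\location^\qry}[f]-\expectation_{\Px}[f]| \le \norm{f}_\infty \int_\locDomain|\pDensity_\location-\hat{\pDensity}_\location|$ and then using $\norm{f}_\infty\le\norm{f}_k\le\bound$ (reproducing property, Cauchy--Schwarz, $k(\location,\location)\le1$); your mean-embedding opening followed by sup-norm domination of the MMD is the identical estimate in different clothing. Both arguments then invoke Pinsker's inequality and the closed-form $\kl{\pMeasure_\location^\qry}{\Px}$ for Gaussians with a common mean, with exactly the cancellation of mean terms you describe.

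The difference is the constant, and here your hesitation is vindicated rather than being a shortcoming of your argument. The paper reaches $\tfrac{\bound}{2}$ by citing its \autoref{thr:pinsker}, which asserts $\int_\locDomain|\pDensity-\pDensity'|\diff\measure{u}\le\sqrt{\tfrac12\kl{\pMeasure'}{\pMeasure}}$ for the \emph{full} $L^1$ distance. The standard Pinsker inequality \citep{Boucheron2013} bounds the total-variation distance $\sup_{\set{A}}|\pMeasure(\set{A})-\pMeasure'(\set{A})|=\tfrac12\int_\locDomain|\pDensity-\pDensity'|\diff\measure{u}$ by $\sqrt{\tfrac12\kl{\pMeasure'}{\pMeasure}}$, i.e. $\int_\locDomain|\pDensity-\pDensity'|\diff\measure{u}\le\sqrt{2\kl{\pMeasure'}{\pMeasure}}$, and this constant is sharp (nearby Bernoulli pairs attain it asymptotically), so the factor of two you are missing cannot be recovered along this route: the lemma as stated in the paper overstates Pinsker by precisely that factor. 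Consequently the argument you both follow only delivers $\bound\sqrt{\tr(\mat{\hat{\Sigma}}^{-1}\mat{\Sigma}^\qry)-\locDim+\log(|\mat{\hat{\Sigma}}|/|\mat{\Sigma}^\qry|)}$, and the stated $\tfrac{\bound}{2}$ is not justified by the paper's own proof. Your kernel-positivity refinement (valid when $k\ge0$ pointwise, an assumption the proposition does not make) does tighten this to $\bound\sqrt{\kl{\pMeasure_\location^\qry}{\Px}}$, i.e. $\tfrac{\bound}{\sqrt2}\sqrt{\tr(\mat{\hat{\Sigma}}^{-1}\mat{\Sigma}^\qry)-\locDim+\log(|\mat{\hat{\Sigma}}|/|\mat{\Sigma}^\qry|)}$, which is better than what the paper's argument yields under the correct normalisation, yet still $\sqrt2$ short of the claim; and the inequality $\norm{\meanMap_{\pMeasure_\location^\qry}-\meanMap_{\Px}}_k^2\le\tfrac12\kl{\pMeasure_\location^\qry}{\Px}$ that you correctly identify as what would be needed is not established in the paper either. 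In short: you have reproduced the paper's proof up to an absolute constant, and the residual gap you flagged is a defect in the paper's stated constant, not in your reasoning.
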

\begin{proof}[Proof sketch]
This result follows by applying Pinsker's inequality \citep{Boucheron2013} to $\Px$ and $\pMeasure^\qry_\location$.
\end{proof}

\section{Experiments}
\label{sec:exp}
In this section, we present experimental results obtained in simulation with the proposed uGP-UCB algorithm comparing it against other Bayesian optimisation methods: IGP-UCB, with adapted noise model (as in \autoref{thr:bo}), and the unscented expected improvement (UEI) heuristic \citep{Nogueira2016}, which applies the unscented transform to the expected improvement over a conventional GP model. Our aim in this section is to evaluate the performance of these methods in optimisation problems where both the sampling of the objective function and the location at which the sample is taken are significantly noisy. 

\subsection{Objective functions in the same RKHS}
In this experiment, for each trial a different objective function $f\in\Hspace_k$ was generated. The search space was set to the unit square $\Sspace = [0,1]^2 \subset \R^2$. Each $f = \sum_{i=1}^{\nFeatures} \alpha_i k(\cdot,\location_i)$ was generated by uniformly sampling $\alpha_{i}\in [-1,1]$ and support points $\location_i\in\Sspace$, for $i \in\{1,\dots,\nFeatures\}$, with $\nFeatures=30$. Observation noise was set as $\uObsNoise_\iterIdx\sim\normal(0,\sigma_\uObsNoise^2)$ with $\sigma_\uObsNoise=0.1$.

As parameters to verify the theoretical results for the UCB algorithms, we set $\delta = 0.4$, and computed $B = \lVert f \rVert_k$ directly. The querying execution noise in $\pMeasure_\location^\qry$ was i.i.d sampled from $\normal(\vec{0},\sigma_\location^2\mat{I})$ with $\sigma_\location = 0.1$. The output noise parameters for the GP model  were computed according to \autoref{thr:subg}, with each method assuming execution noise coming from $\normal(\vec{0},\hat{\sigma}_\location^2 \mat{I})$. To verify robustness to noise-misspecification, we tested $\hat{\sigma}_\location$ set according to different ratios with respect to the true $\sigma_\location$. Noise on the localisation estimates $\pMeasure^\loc_t$ was set at half the standard deviation of the true execution noise. We directly computed the current information gain $\mutinfo{\observations_{\iterIdx};\vec{\uncertain{f}}_\iterIdx\mid\{\pMeasure^\loc_\primIdx\}_{\primIdx=1}^\iterIdx}$ to set $\beta_\iterIdx$. For both UCB methods and UEI, kernel length-scales were set to 0.1.

\begin{figure}[t]
\centering
\includegraphics[width=\columnwidth]{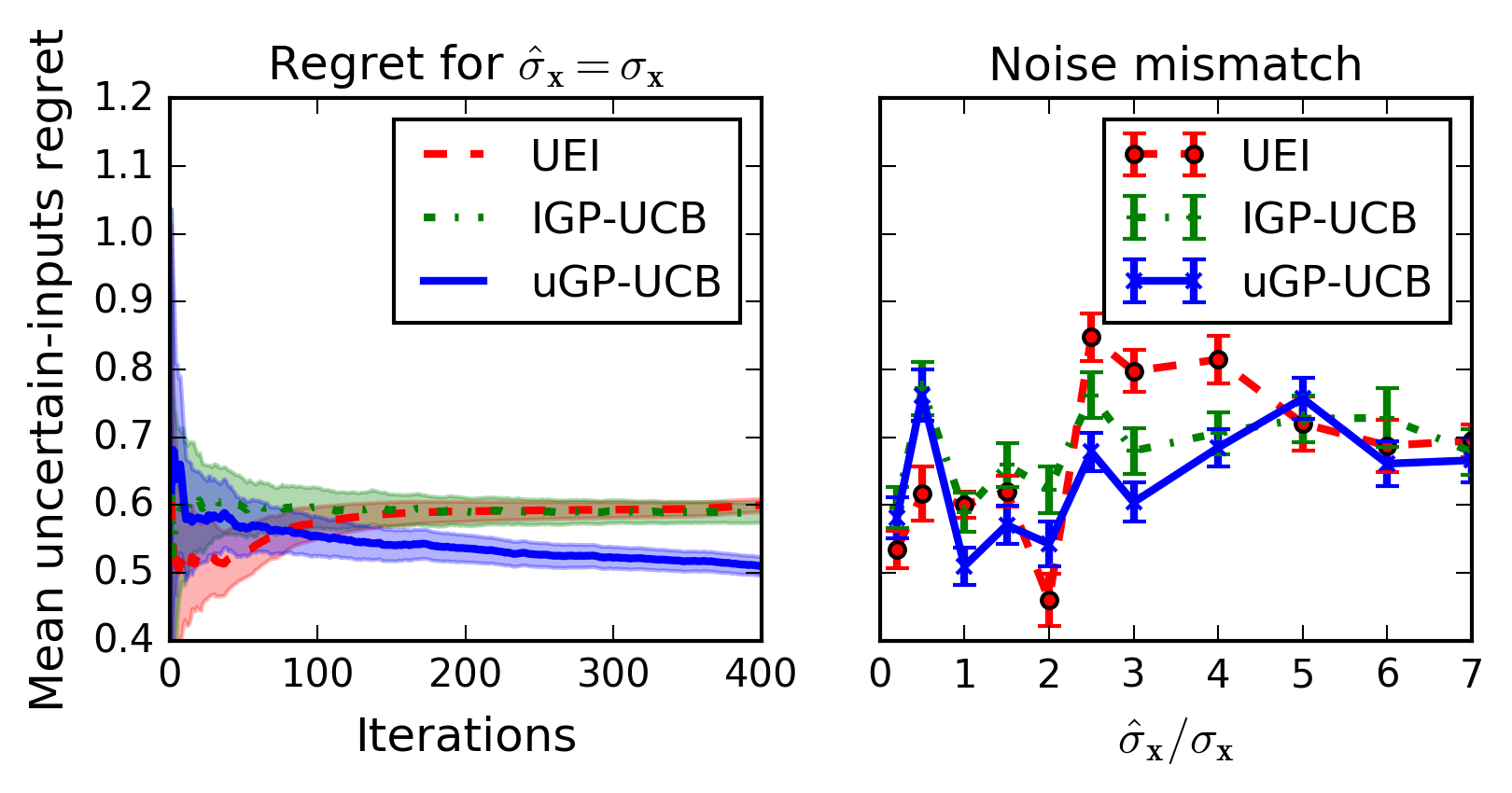}
\caption{Mean uncertain-inputs regret for IGP-UCB, UEI, and uGP-UCB in the optimisation of functions in the same RKHS. On the left, the UCB confidence-bound parameter $\beta_\iterIdx$ was set according to the theoretical results. The plot on the right shows the effect of execution noise model mismatch on each method's regret after running for a total of 400 iterations. Results were averaged over 10 trials, and the shaded areas and error bars correspond to one standard deviation.}
\label{fig:rkhs_functions_perf}
\end{figure}

\paragraph{Results:} \autoref{fig:rkhs_functions_perf} presents performance results, in terms of mean uncertain-inputs regret, i.e. $\uregret_\iterIdx^{\operatorname{avg}}=\frac{1}{\iterIdx}\sum_{\primIdx=1}^\iterIdx\uregret_\primIdx$. This performance metric is an upper bound on the simple regret, since $\min_{\primIdx\leq\iterIdx}\uregret_\primIdx \leq \uregret_\iterIdx^{\operatorname{avg}}$, and allows verifying how close each method gets to the global optimum within $\iterIdx$ iterations. As the plots show, when the execution noise model is correct, with $\hat{\sigma}_\location=\sigma_\location$, uGP-UCB is able to outperform both IGP-UCB and UEI, while every method's performance degrades under mismatch in the execution noise assumption. A larger than needed execution noise variance leads to a large $\beta_\iterIdx$ for the UCB methods, promoting exploration. Querying with a very noisy model $\Px$ also excessively smoothes the GP prior and the acquisition function for uGP-UCB and UEI, respectively. Consequently, each method's model on $f$ tends to a flat function, and none of them is able to make significant improvements after large mismatches, such as $\hat{\sigma}_\location\geq 5\sigma_\location$, as \autoref{fig:rkhs_functions_perf} shows. Despite the loss of performance, uGP-UCB remains as a general lower bound in terms of regret, showing that the proposed method is relatively robust to the effects of mismatch in the execution noise model.

In practice, the convergence rate in \autoref{fig:rkhs_functions_perf} can be improved by setting the UCB parameter $\beta_\iterIdx$ at a fixed low value. As the $\set{O}$ notation indicates, cumulative regret bounds are valid only up to a constant factor. Their main focus is on guaranteeing asymptotic convergence, i.e. $\lim\limits_{\nIterations\to\infty}\uRegret_\nIterations/\nIterations = 0$, as most theoretical results in the UCB literature \citep{Srinivas2010,Chowdhury2017}. To achieve that, the value of the UCB parameter $\beta_\iterIdx$ monotonically increases over iterations, ensuring that the entire search space is explored. The drawback, however, is that excessive exploration decreases performance in the short term. In the next section, we present results where $\beta_\iterIdx$ is fixed.

\subsection{Objective function in different RKHS}
To verify uGP-UCB's performance under incorrect kernel assumptions, the next experiment performed tests with an objective function in a space not matching the GP kernel's RKHS. In particular, we chose the 4-dimensional Michalewicz function, which is a classic benchmark function for global optimisation algorithms \citep{Vanaret2014}, over the domain $\Sspace = [0,\pi]^4$. \autoref{fig:diff-rkhs} presents performance results for fixed $\beta_\iterIdx = 3$. The plots also evaluate each algorithm's sensitivity to the choice of $\beta_\iterIdx$ as a way to asses the robustness of the methods when theoretical assumptions are not met. Input noise was set to $\sigma_\location = 0.1$. As seen, both the proposed uGP-UCB and IGP-UCB can outperform the unscented BO approach UEI. In addition, one can see that uGP-UCB shows consistently better performance than that of IGP-UCB across varying settings for $\beta_\iterIdx$. These results demonstrate that the uGP-UCB algorithm should be able to perform well in situations where its modelling assumptions are not exactly met, such as in scenarios involving physical systems, as presented next.

\begin{figure}[t]
\centering
\includegraphics[width=\columnwidth]{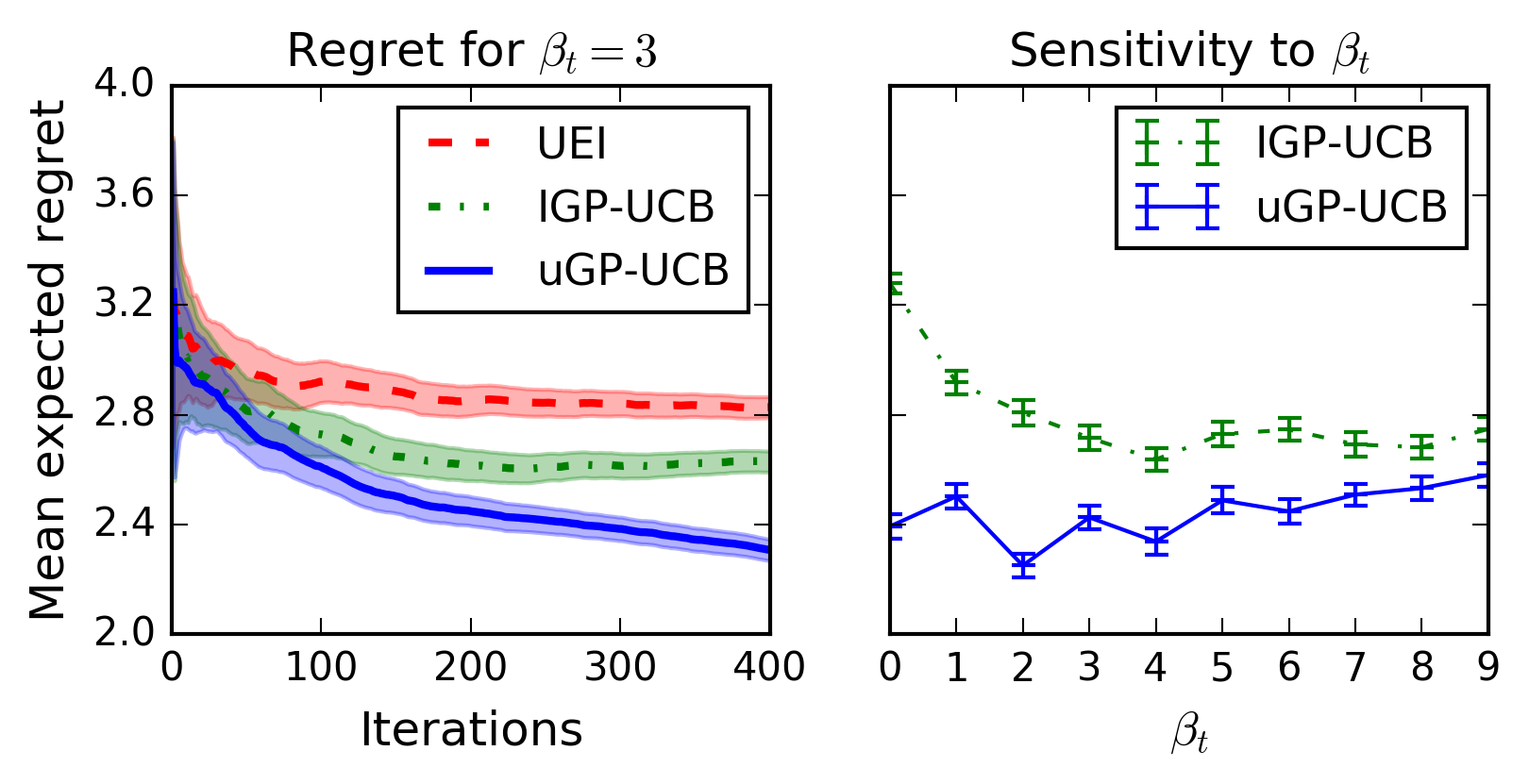}\label{fig:diff-rkhs}
\caption{Optimisation of the Michalewicz function. The plot on the left presents the mean expected regret observed for each algorithm with $\beta_\iterIdx=3$ for UCB methods. On the right, we see how different settings for $\beta_\iterIdx$ affect each UCB method's mean expected regret after 300 iterations. Results were averaged over 10 (left) and 5 (right) trials with shaded areas and error bars corresponding to two standard deviations.}
\end{figure}

\subsection{Robotic exploration problem}
This section presents results obtained in a simulated robotic exploration problem. In this experiment, a robot is set to explore an environmental process. The underlying process is based on the Broom's Barn dataset\footnote{Available at \url{http://www.kriging.com/datasets/}}, consisting of the log-concentration of potassium in the soil of an experimental agricultural area. The robot is allowed to perform up to 30 measurements on different locations. Each BO method sequentially selects the locations where the robot should make a measurement in the usual online decision making process, based on the observations it gets. To simulate the robot, an ATRV platform, we used the OpenRobots' Morse simulator\footnote{Morse: \url{https://www.openrobots.org/morse}}. In this scenario, execution noise is not following a stationary distribution due to the dynamic constraints of the robot, imperfections in motion control, etc. We applied Gaussian noise to the pose information given by the simulator and used pure-pursuit path-following control to guide the robot to the target locations. Location estimates were provided by an extended Kalman filter \citep{Thrun2006}. Hyper-parameters for each GP were learnt online via log-marginal likelihood maximisation. The query noise model for uGP-UCB was set with $\hat{\sigma}_\location^2 = 2$. We set $\beta_t$ at a fixed value, again with $\beta_t = 3$. \autoref{fig:rob} presents the performance of each algorithm in terms of regret. The plots show that uGP-UCB is able to outperform UEI, while performing still better than IGP-UCB in the long run, and with less variance in the outcomes. This result confirms that it is possible to obtain better performance in practical BO problems by taking advantage of distribution estimates and by directly considering execution uncertainty.

\begin{figure}[t]
\centering
\subfloat[Broom's barn data]{\includegraphics[width=0.2\textwidth]{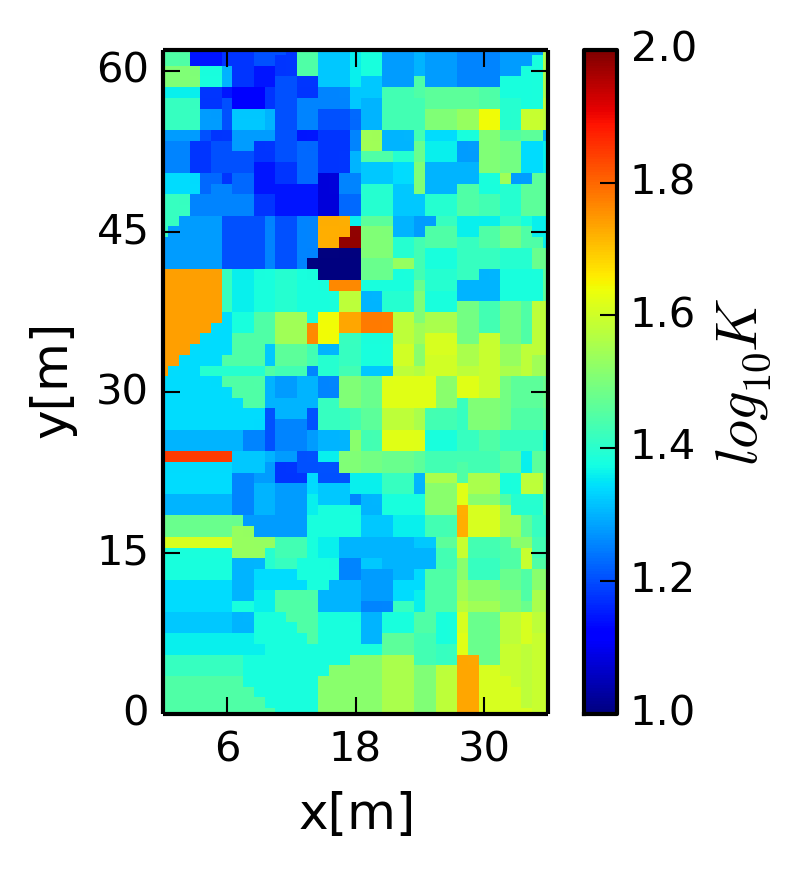}\label{fig:data}}
\subfloat[Robotics problem]{\includegraphics[width=0.27\textwidth]{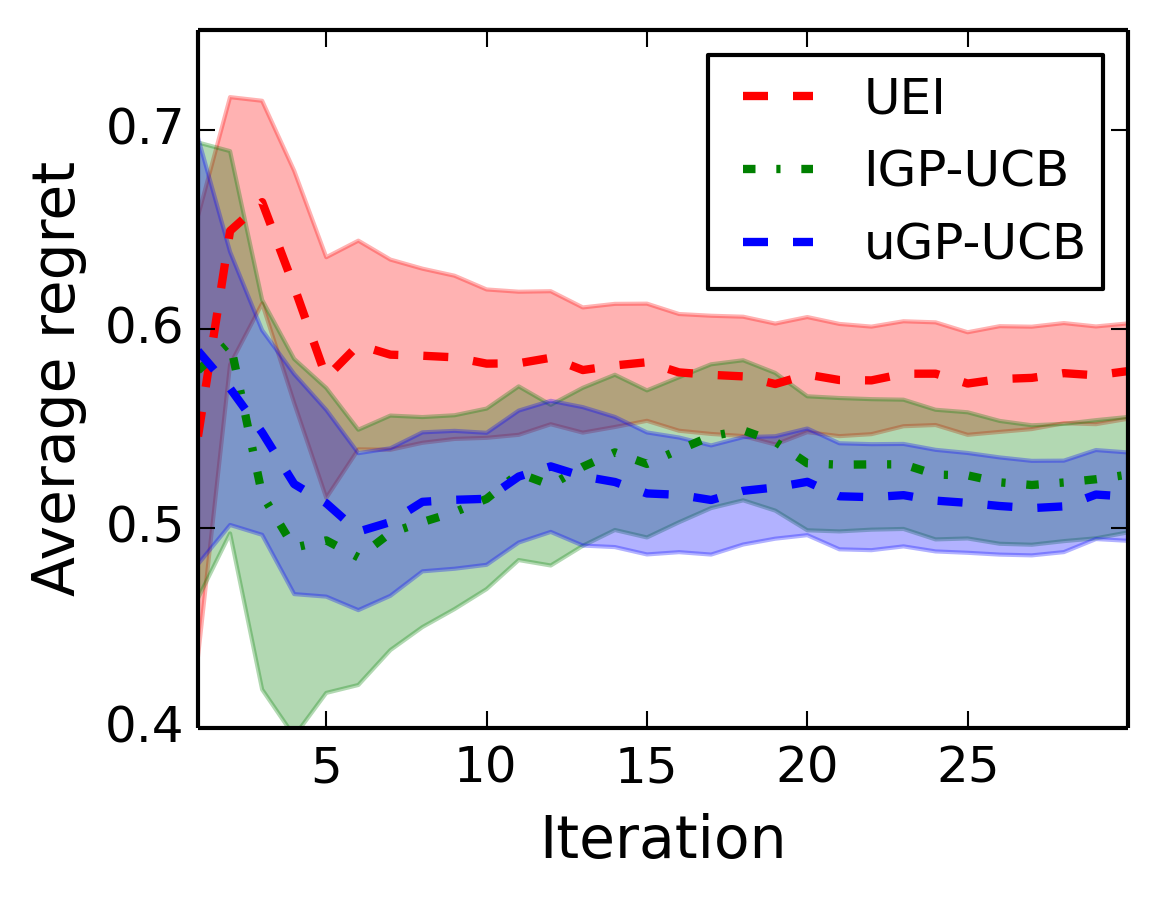}\label{fig:rob}} 
\caption{Robotics exploration experiment: \protect\subref{fig:data} presents the Broom's barn data as distributed over the search space; and \protect\subref{fig:rob} shows the performance of each BO approach, averaged over 4 runs.}
\end{figure}

\section{Conclusion}
\label{sec:conclusion}
In this paper we proposed a novel method to optimise functions where both the sampling of the function as well as the location at which the function is sampled are stochastic. We also provided theoretical guarantees for BO algorithms in noisy-inputs settings. In terms of empirical results, experiments demonstrated that the proposed uGP-UCB shows competitive performance when compared to other BO approaches to input noise. Our method can be applied to many problems where input variates or an agent's state is only partially observable, such as robotics, policy search, stochastic simulations, and others. For future work, it is worth investigating online-learning techniques for the approximate querying distribution $\Px$ that can cope with noisy location estimates and other upper bounds for the uncertain-inputs GP information gain.

\subsection*{Acknowledgements}
We would like to thank the reviewers and Dr. Vitor Guizilini for helpful discussions and the funding agencies CAPES, Brazil, and Data61/CSIRO, Australia.

\setcounter{section}{0}
\renewcommand\thesection{\Alph{section}}
\section{Appendix}

This section presents proofs for auxiliary theoretical results in the main paper. The section starts by presenting some common definitions and lemmas applied by the proofs. More specific background for a given proof, when necessary, will be presented in the section containing the proof itself. Each subsection then presents a proof for each result. In the end, we also present the formulation of the uncertain-inputs squared-exponential kernel (\autoref{sec:k_use}) used in experiments. For reference, a notation summary is presented in \autoref{tab:notation}.

The main theorems in this paper are based on the following result by \citet{Chowdhury2017}, restated here for convenience.

\begin{theorem}[{\citet[Theorem 3]{Chowdhury2017}}]
\label{thr:sbo}
Let $\delta \in (0,1)$, $\lVert f \rVert_{k} \leq \bound$, and $\obsNoise_\iterIdx$ be conditionally $\sigma_\obsNoise$-sub-Gaussian noise. Then, running IGP-UCB with $\beta_\iterIdx = \bound + \sigma_\obsNoise\sqrt{2({\gamma}_{\iterIdx-1}+1+\log(1/\delta))}$ for $f \in \Hspace_{k}(\Sspace)$, and a compact $\Sspace\subset\R^\locDim$, the cumulative regret of the algorithm is bounded by $\set{O}(\sqrt{\nIterations}(\bound\sqrt{{\gamma}_\nIterations}+{\gamma}_\nIterations))$ with high probability. Specifically, we have that:
\begin{equation}
\prob{\Regret_\nIterations \in \set{O}\left(\bound\sqrt{\nIterations\gamma_\nIterations}+\sqrt{\nIterations(\gamma_\nIterations+\log(1/\delta))}\right)} \geq 1 - \delta~.
\label{eq:sbo-bound}
\end{equation}
\end{theorem}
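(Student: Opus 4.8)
The plan is to run the standard optimism-under-uncertainty argument for kernelised UCB. Its technical core is a self-normalised concentration inequality in the RKHS; granting that, the cumulative regret bound assembles from a per-round optimism step and a summation that trades the accumulated posterior standard deviations for the maximum information gain $\gamma_\nIterations$.

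The first and hardest step is to show that, with probability at least $1-\delta$, the confidence event
\[
\anyevent := \left\{\, \forall \iterIdx\geq 1,\ \forall \location\in\Sspace:\ |f(\location)-\gpMean_{\iterIdx-1}(\location)| \leq \beta_\iterIdx\,\sigma_{\iterIdx-1}(\location) \,\right\}
\]
holds, where $\gpMean_{\iterIdx-1}$ and $\sigma_{\iterIdx-1}$ are the $\gp(0,k)$ posterior mean and standard deviation after $\iterIdx-1$ observations with regulariser $\lambda$. Using the reproducing property, I would decompose $f(\location)-\gpMean_{\iterIdx-1}(\location)$ into a bias term controlled by $\norm{f}_k\leq\bound$ and a stochastic term driven by the noise sequence $\{\obsNoise_\primIdx\}$. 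Viewing the feature-space sum $\sum_\primIdx\obsNoise_\primIdx k(\cdot,\location_\primIdx)$ as a Hilbert-space-valued martingale adapted to $\{\filtration_\iterIdx\}$, the conditional $\sigma_\obsNoise$-sub-Gaussianity supplies per-step exponential control, and a method-of-mixtures (pseudo-maximisation) argument over the regularised design turns this into a self-normalised tail bound scaling with $\sqrt{\tfrac12\log\det(\eye+\lambda^{-1}\mat{K}_{\iterIdx-1})}\leq\sqrt{\gamma_{\iterIdx-1}}$; combining the two contributions reproduces exactly $\beta_\iterIdx=\bound+\sigma_\obsNoise\sqrt{2(\gamma_{\iterIdx-1}+1+\log(1/\delta))}$. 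I expect this to be the main obstacle, since the sub-Gaussian tails and the log-determinant enter jointly here, and one must make the bound \emph{anytime}, i.e. uniform over all rounds $\iterIdx$, via a stopping-time/mixture argument; uniformity over $\location\in\Sspace$, by contrast, comes for free from the RKHS confidence-ellipsoid formulation rather than from a union bound.

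Conditioning on $\anyevent$, the optimism step is routine. By the selection rule $\location_\iterIdx=\argmax_{\location\in\Sspace}\gpMean_{\iterIdx-1}(\location)+\beta_\iterIdx\sigma_{\iterIdx-1}(\location)$ together with the confidence bound, $f(\location^*)\leq\gpMean_{\iterIdx-1}(\location^*)+\beta_\iterIdx\sigma_{\iterIdx-1}(\location^*)\leq\gpMean_{\iterIdx-1}(\location_\iterIdx)+\beta_\iterIdx\sigma_{\iterIdx-1}(\location_\iterIdx)$, while $\gpMean_{\iterIdx-1}(\location_\iterIdx)-f(\location_\iterIdx)\leq\beta_\iterIdx\sigma_{\iterIdx-1}(\location_\iterIdx)$; hence the instantaneous regret satisfies $\regret_\iterIdx=f(\location^*)-f(\location_\iterIdx)\leq 2\beta_\iterIdx\sigma_{\iterIdx-1}(\location_\iterIdx)$.

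Finally I would sum and convert to information gain. Since $\beta_\iterIdx$ is nondecreasing, $\Regret_\nIterations\leq 2\beta_\nIterations\sum_{\iterIdx=1}^{\nIterations}\sigma_{\iterIdx-1}(\location_\iterIdx)$, and Cauchy--Schwarz gives $\sum_\iterIdx\sigma_{\iterIdx-1}(\location_\iterIdx)\leq\sqrt{\nIterations\sum_\iterIdx\sigma_{\iterIdx-1}^2(\location_\iterIdx)}$. Because $\sigma_{\iterIdx-1}^2(\location)\leq k(\location,\location)\leq 1$, the elementary inequality $\anyscalar\leq\frac{1}{\log(1+\lambda^{-1})}\log(1+\lambda^{-1}\anyscalar)$ valid for $\anyscalar\in[0,1]$, combined with the determinant identity $\sum_\iterIdx\log(1+\lambda^{-1}\sigma_{\iterIdx-1}^2(\location_\iterIdx))=\log\det(\eye+\lambda^{-1}\mat{K}_\nIterations)\leq 2\gamma_\nIterations$, yields $\sum_\iterIdx\sigma_{\iterIdx-1}^2(\location_\iterIdx)\in\set{O}(\gamma_\nIterations)$. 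Substituting back gives $\Regret_\nIterations\in\set{O}(\beta_\nIterations\sqrt{\nIterations\gamma_\nIterations})$, and inserting $\beta_\nIterations=\bound+\sigma_\obsNoise\sqrt{2(\gamma_{\nIterations-1}+1+\log(1/\delta))}$ produces the stated bound, all on the probability-$(1-\delta)$ event $\anyevent$.
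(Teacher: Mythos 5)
This statement is not proved in the paper at all: it is a verbatim restatement of Theorem 3 of \citet{Chowdhury2017}, which the paper imports as a black box and on top of which its own results (Theorems \ref{thr:bo} and \ref{thr:main}) are built. Your sketch correctly reconstructs the original proof of that cited theorem --- the self-normalised concentration for RKHS-valued martingales via the method of mixtures giving an anytime confidence ellipsoid, the optimism step yielding $\regret_\iterIdx \leq 2\beta_\iterIdx\sigma_{\iterIdx-1}(\location_\iterIdx)$, and the Cauchy--Schwarz plus log-determinant conversion of $\sum_\iterIdx \sigma_{\iterIdx-1}^2(\location_\iterIdx)$ into $\set{O}(\mig_\nIterations)$ --- so it is correct and follows essentially the same route as the source the paper defers to.
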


The following are common definitions and known theoretical results applied by different proofs.

\begin{definition}\label{def:subg}
For a given $\sigma_\rv>0$, a real-valued random variable $\rv$ is said to be $\sigma_\rv$-sub-Gaussian if:
\begin{equation}
\forall \lambda \in \R, ~ \expectation[e^{\lambda \rv}] \leq e^{\lambda^2\sigma_\rv^2/2} ~.
\end{equation}
\end{definition}

\begin{definition}[Bounded linear operator]
\label{def:blop}
A linear operator $\operator{L}:\anyspace\to\anotherspace$ mapping a vector space $\anyspace$ to a vector space $\anotherspace$, both over the same field, is any operator such that, for all $\anyelement,\anyelement'\in\anyspace$ and any scalar $\alpha$:
\begin{axioms}
\item $\operator{L}(\anyelement+\anyelement') = \operator{L}\anyelement + \operator{L}\anyelement'$
\item $\operator{L}(\alpha\anyelement) = \alpha\operator{L}\anyelement$
\end{axioms}
If $\anyspace$ and $\anotherspace$ are normed vector spaces, the operator $\operator{L}$ is bounded if there is a constant $\anyconstant\in\R$ such that:
\begin{equation}
\forall\anyelement\in\anyspace,\quad \norm{\operator{L}\anyelement}_{\anotherspace} \leq \anyconstant \norm{\anyelement}_{\anyspace}~.
\end{equation}
The smallest $c$ satisfying the above is called the norm of the operator $\operator{L}$, denoted by $\norm{\operator{L}}$.
\end{definition}

\begin{lemma}[{Bounded linear extension theorem \citep[Thr. 2.7-11]{Kreyszig1978}}]
\label{thr:ble}
Let $\operator{M}:\anydomain\to\anotherspace$ be a bounded linear operator, where $\anydomain$ lies in a normed vector space $\anyspace$, and $\anotherspace$ is a Banach space. Then $\operator{M}$ has an extension $\operator{\overline{M}}:\overline{\anydomain}\to\anotherspace$, where $\operator{\overline{M}}$ is a bounded linear operator with norm $\norm{\operator{\overline{M}}}=\norm{\operator{M}}$, and $\overline{\anydomain}$ denotes the closure of $\anydomain$ in $\anyspace$.
\end{lemma}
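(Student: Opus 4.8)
The plan is to build the extension $\operator{\overline{M}}$ pointwise as a limit of values of $\operator{M}$, leaning on the completeness of $\anotherspace$ to guarantee that those limits exist. Fix $\anyelement\in\overline{\anydomain}$. By the definition of closure there is a sequence $(\anyelement_n)_{n\in\N}$ in $\anydomain$ with $\anyelement_n\to\anyelement$ in $\anyspace$. Since a convergent sequence is Cauchy and $\operator{M}$ is bounded, I would estimate $\norm{\operator{M}\anyelement_n-\operator{M}\anyelement_m}_{\anotherspace}=\norm{\operator{M}(\anyelement_n-\anyelement_m)}_{\anotherspace}\leq\norm{\operator{M}}\,\norm{\anyelement_n-\anyelement_m}_{\anyspace}\to 0$, so that $(\operator{M}\anyelement_n)$ is Cauchy in $\anotherspace$. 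This is exactly the point where the hypothesis that $\anotherspace$ is Banach enters: completeness forces $(\operator{M}\anyelement_n)$ to converge to some $\anotherelement\in\anotherspace$, and I set $\operator{\overline{M}}\anyelement:=\anotherelement$.

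First I would verify that this value does not depend on the chosen sequence. If $(\anyelement_n)$ and $(\anyelement_n')$ both converge to $\anyelement$, I would interleave them into a single sequence that still converges to $\anyelement$; its image under $\operator{M}$ is Cauchy and hence convergent by the argument above, so the two subsequences $(\operator{M}\anyelement_n)$ and $(\operator{M}\anyelement_n')$ must share the same limit. This makes $\operator{\overline{M}}$ a well-defined map on $\overline{\anydomain}$, and choosing the constant sequence $\anyelement_n\equiv\anyelement$ for $\anyelement\in\anydomain$ shows that $\operator{\overline{M}}$ agrees with $\operator{M}$ on $\anydomain$, so it is genuinely an extension.

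Linearity would then follow by passing limits through the continuous vector-space operations: for $\anyelement,\anyelement'\in\overline{\anydomain}$ with approximating sequences $(\anyelement_n),(\anyelement_n')$ and a scalar $\alpha$, the sequence $(\alpha\anyelement_n+\anyelement_n')$ approximates $\alpha\anyelement+\anyelement'$, and linearity of $\operator{M}$ together with continuity of addition and scalar multiplication yields $\operator{\overline{M}}(\alpha\anyelement+\anyelement')=\alpha\operator{\overline{M}}\anyelement+\operator{\overline{M}}\anyelement'$. For the operator norm, continuity of $\norm{\cdot}_{\anotherspace}$ gives $\norm{\operator{\overline{M}}\anyelement}_{\anotherspace}=\lim_n\norm{\operator{M}\anyelement_n}_{\anotherspace}\leq\norm{\operator{M}}\lim_n\norm{\anyelement_n}_{\anyspace}=\norm{\operator{M}}\,\norm{\anyelement}_{\anyspace}$, whence $\operator{\overline{M}}$ is bounded with $\norm{\operator{\overline{M}}}\leq\norm{\operator{M}}$; the reverse inequality is immediate, because the supremum defining $\norm{\operator{\overline{M}}}$ ranges over $\overline{\anydomain}\supseteq\anydomain$, so $\norm{\operator{\overline{M}}}\geq\norm{\operator{M}}$, and equality follows.

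I expect the main obstacle to be the well-definedness step combined with the correct invocation of completeness: all the remaining arguments are routine ``pass to the limit'' manipulations, but one must be careful both that the limit $\anotherelement$ actually exists (precisely where the Banach assumption on $\anotherspace$ is used) and that it is intrinsic to $\anyelement$ rather than an artefact of the approximating sequence. Uniqueness of the extension, should it be wanted, drops out of the same continuity principle: any bounded---hence continuous---operator that agrees with $\operator{M}$ on the dense subset $\anydomain$ of $\overline{\anydomain}$ must coincide with $\operator{\overline{M}}$ throughout $\overline{\anydomain}$.
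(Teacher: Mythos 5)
Your proof is correct, and it is the standard density-plus-completeness argument: define $\operator{\overline{M}}$ on $\overline{\anydomain}$ via limits of images of approximating sequences, use completeness of $\anotherspace$ for existence, interleaving for well-definedness, and continuity for linearity and the norm equality. Note that the paper itself gives no proof of this lemma --- it is imported verbatim from \citet[Thr.\ 2.7-11]{Kreyszig1978} --- so your argument simply reconstructs the textbook proof of the cited result (implicitly using, as one must, that $\anydomain$ is a linear subspace so that $\overline{\anydomain}$ is too), and there is nothing to fault in it.
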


\begin{table}[t]
\caption{Notation}
\begin{tabular}[ht]{ll}
\hline
$\R$ & the field of real numbers, or the real line \\
$\R^\locDim$ & the Euclidean vector space of dimension $\locDim$\\
$\locDomain$ & domain of BO's objective function\\
$\Sspace$ & BO's search space, a subset of $\locDomain$\\
$\Pspace$ & set of all probability measures on $\locDomain$\\
$\location$ & a location vector, $\location\in\R^\locDim$\\
$\random{\location}$ & an $\R^\locDim$-valued random variable\\
$f$ & deterministic-inputs function\\
$\uncertain{f}$ & uncertain-inputs function, i.e. $\uncertain{f}:\Pspace\to\R$\\
$\pMeasure$ & a probability measure or distribution\\
$\pMeasure^\loc_t$ & location distribution informed after query\\
$\pMeasure^\qry_\location$ & query location distribution given target $\location$\\
$\hat{\pMeasure}_\location$ & model for $\pMeasure^\qry_{\location}$ used by uGP-UCB\\
$\meanMap_\pMeasure$ & kernel mean embedding of $\pMeasure$\\
$k$ & a positive-definite kernel\\
$\Hspace_{k}$ & the RKHS of $k$\\
$\Hspace_{k}(\Sspace)$ & restriction of $\Hspace_{k}$ to a subdomain $\Sspace$\\
$\Hspace_{k}^0$ & the pre-Hilbert space defined by $k$\\
$\overline{\Hspace_{k}^0}$ & the closure of the pre-RKHS $\Hspace_{k}^0$ in $\Hspace_{k}$\\
\hline
\end{tabular}
\label{tab:notation}
\end{table}

\subsection{Proof of \autoref{thr:exp-f}}
\label{sec:exp-f-proof}
\expectedfunction*
\begin{proof}
\autoref{thr:exp-f} basically follows from the presence of Dirac measures in $\Pspace$, which allow transforming point evaluations into expectations. For the proof, we will first derive a bounded linear operator $\map:\Hspace_{k}\to\Hspace_{\kp}$ satisfying the conditions in \autoref{eq:exp-f-def}. From \autoref{def:blop}, it is not hard to see that any bounded linear operator is also continuous \citep[see][Thr. 2.7-9]{Kreyszig1978}. The isometric relationship between $\Hspace_{k}$ and $\Hspace_{\kp}$ depends on the existence of a bijective isometry between the two Hilbert spaces. We will prove that by showing that $\map$, which is an isometry, has an inverse $\map^{-1}:\Hspace_{\kp}\to\Hspace_{k}$.

To facilitate the analysis, we start by working with the pre-RKHS associated with $k$, which is defined as:
\begin{equation}
\Hspace_k^0 := \lspan\{k(\cdot,\location)\mid\location\in\locDomain\}~,
\label{eq:pre-rkhs}
\end{equation}
where $\lspan$ denotes the linear span, i.e. $\Hspace_k^0$ is the set of all linear combinations of the vectors $k(\cdot,\location)$, $\location\in\locDomain$. Since $\Hspace_k^0$ is dense in $\Hspace_k$ \citep[Thr. 4.21]{Steinwart2008}, any bounded linear map defined on $\Hspace_k^0$ can be extended to the full $\Hspace_k$ by \autoref{thr:ble}. 

Given any $f = \sum_{\primIdx=1}^{\nFeatures} \alpha_\primIdx k(\cdot,\location_\primIdx) \in\Hspace_k^0$, define the map $\map_0: \Hspace_{k}^0 \to \Hspace_{\kp}$ by:
\begin{equation}
\map_0 f = \sum_{\primIdx=1}^{\nFeatures} \alpha_\primIdx \kp(\cdot,\Dirac_{\location_\primIdx}) \in \Hspace_{\kp}~,
\end{equation}
where $\Dirac_\location \in \Pspace$ is the Dirac measure centred on $\location$. From the definition of $\meanMap$ in \autoref{eq:kme}, note that $k(\cdot,\location)=\meanMap_{\Dirac_\location}$ for any $\location\in\locDomain$. With this property and the definition of $\kp$ (\autoref{eq:ugp-k}), for any $f\in\Hspace_k^0$, we have that:
\begin{equation}
\begin{split}
\forall \pMeasure\in\Pspace, \quad \map_0 f(\pMeasure)
&= \sum_{\primIdx=1}^{\nFeatures} \alpha_\primIdx \kp\left(\pMeasure,\Dirac_{\location_\primIdx}\right)\\
&= \sum_{\primIdx=1}^{\nFeatures} \alpha_\primIdx \inner{\meanMap_{\pMeasure},k(\cdot,\location)}_k\\
&= \inner{f,\meanMap_\pMeasure}_k = \expectation_\pMeasure[f]
\end{split}
\end{equation}
Linearity follows, since, for any $f,g\in\Hspace_k^0$:
\begin{equation}
\begin{split}
\map_0(f+g)(\pMeasure) &= \expectation_\pMeasure[f+g]\\
&= \expectation_\pMeasure[f]+\expectation_\pMeasure[g]\\
&= \map_0 f(\pMeasure) + \map_0 g(\pMeasure)\,, \, \forall \pMeasure\in\Pspace\,,
\end{split}
\end{equation}
and, for any $\alpha\in\R$:
\begin{equation}
\begin{split}
\map_0(\alpha f)(\pMeasure) &= \expectation_\pMeasure[\alpha f]\\ 
&= \alpha \expectation_\pMeasure[f]\\
&= \alpha\map_0 f(\pMeasure)\,, \,\forall \pMeasure\in\Pspace~.
\end{split}
\end{equation}
Furthermore, for any $f := \sum_{\primIdx=1}^{\nFeatures} \alpha_\primIdx k(\cdot,\location_\primIdx) \in\Hspace_k^0$, the RKHS norm of $\uncertain{f}=\map_0 f$ is such that:
\begin{equation}
\begin{split}
\norm{\uncertain{f}}_{\kp}^2 &= \sum_{\primIdx=1}^{\nFeatures} \sum_{\secIdx=1}^{\nFeatures} \alpha_\primIdx \alpha_\secIdx \kp\left(\Dirac_{\location_\primIdx},\Dirac_{\location_\secIdx}\right)\\
&= \sum_{\primIdx=1}^{\nFeatures} \sum_{\secIdx=1}^{\nFeatures} \alpha_\primIdx \alpha_\secIdx k(\location_\primIdx,\location_\secIdx) = \norm{f}_k^2~.
\end{split}
\end{equation}
Therefore, $\map_0$ represents a bounded linear operator.  Applying \autoref{thr:ble} to $\map_0$ yields the first statement in \autoref{thr:exp-f}. For the remaining steps, let $\map := \overline{\map_0}$.

For $\Hspace_{\kp}$ to be isometric to $\Hspace_k$, the mapping by $\map$ needs to be invertible. As a bounded linear operator between Hilbert spaces, $\map$ has a unique adjoint $\map^*:\Hspace_{\kp}\to\Hspace_{k}$ with $\norm{\map^*}=\norm{\map}$ \citep[Thm. 3.9-2]{Kreyszig1978}. In our case, $\map^*$ is such that, given any $\uncertain{f}\in\Hspace_{\kp}$:
\begin{equation}
\begin{split}
\uncertain{f}(\pMeasure) &= \inner{\uncertain{f},\kp(\cdot,\pMeasure)}_{\kp}\\
&= \inner{\uncertain{f},\map \meanMap_\pMeasure}_{\kp}\\
&= \inner{\map^* \uncertain{f},\meanMap_\pMeasure}_k\\
&= \expectation_\pMeasure[\map^* \uncertain{f}]\,,\quad\forall \pMeasure\in\Pspace\,.
\end{split}
\end{equation}
Setting $\uncertain{f}:=\map f$, for $f \in\Hspace_k$, in the equation above, we see that $\expectation_\pMeasure[f]=\map f(\pMeasure)=\expectation[\map^*\map f]$, so that $\map^*=\map^{-1}$, which concludes the proof.
\end{proof}

\subsection{Proof of \autoref{thr:bo}}
\label{sec:proof-bo}
\thrboregret*
\begin{proof}
\autoref{thr:bo} establishes sufficient conditions for \autoref{thr:sbo} to be applicable to the noisy-inputs settings. The observation noise, as perceived by the GP model, is $\obsNoise_\iterIdx := \observation_{\iterIdx} - g(\location_\iterIdx)$, where $g$ follows the definition in \autoref{thr:bo-g} and $\location_{\iterIdx}$ is the location selected by IGP-UCB according to the setting for $\beta_\iterIdx$ in \autoref{thr:bo}. Observations $\observation_{\iterIdx}$ are taken at $\random{\location}^\qry_\iterIdx \sim \pMeasure_{\location_\iterIdx}^\qry$, instead, yielding:
\begin{equation}
\obsNoise_\iterIdx = \observation_{\iterIdx} - g(\location_\iterIdx) = \uObsNoise_\iterIdx + f(\random{\location}^\qry_\iterIdx)-\expectation_{\pMeasure^\qry_{\location_\iterIdx}}[f] = \uObsNoise_\iterIdx + \Delta f_{\pMeasure_{\location_\iterIdx}^\qry}~.
\end{equation}
Given that $\location_\iterIdx$ is $\filtration_{\iterIdx-1}$-measurable, as it is predictable given $\{\location_{i},\obsNoise_i\}_{i=1}^{\iterIdx-1}$, we have that $\Delta f_{\pMeasure_{\location_\iterIdx}^\qry}$ is $\sigma_F$-sub-Gaussian when conditioned on $\filtration_{\iterIdx-1}$. By assumption \ref{thr:bo-n}, $\uObsNoise_\iterIdx$ is conditionally sub-Gaussian. Since $\uObsNoise_\iterIdx$ and $\Delta f_{\pMeasure_{\location_\iterIdx}^\qry}$ are independent given $\filtration_{\iterIdx-1}$, we have that:
\begin{equation}
\begin{split}
\expectation[\exp{(\lambda \obsNoise_\iterIdx)}|\filtration_{\iterIdx-1}] &= \expectation\left[\exp{\left(\lambda\left( \uObsNoise_\iterIdx+\Delta f_{\pMeasure_{\location_\iterIdx}^\qry}\right)\right)}\middle|\filtration_{\iterIdx-1}\right]\\
&=\expectation\left[\exp{\left(\lambda\uObsNoise_\iterIdx\right)}\exp{\left(\lambda\Delta f_{\pMeasure_{\location_\iterIdx}^\qry}\right)}\middle|\filtration_{\iterIdx-1}\right]\\
 &\leq e^{\lambda^2\sigma_\uObsNoise^2/2} e^{\lambda^2\sigma_F^2/2}\\
 &= e^{\lambda^2(\sigma_\uObsNoise^2+\sigma_F^2)/2}\\
 &= e^{\lambda^2\sigma_\obsNoise^2/2} ~\as~,\quad \forall \lambda\in\R\,,
\end{split}
\end{equation}
so that $\obsNoise_\iterIdx$ is conditionally $\sigma_\obsNoise$-sub-Gaussian. 

Assumption \ref{thr:bo-g} states that $g \in \Hspace_k(\Sspace)$, meeting the remaining requirement for \autoref{thr:sbo}. Therefore, running IGP-UCB with $\sigma_\obsNoise$ and $\bound \geq \norm{g}_k$, following the settings in \autoref{thr:sbo}, leads to cumulative regret bounds for $g$ as in \autoref{eq:sbo-bound}. From the definition in \autoref{eq:uregret}, the cumulative regret $\Regret_\nIterations$ for $g$ is equivalent to the uncertain-inputs cumulative regret $\uRegret_\nIterations$ for $f$, which leads to the conclusion in \autoref{thr:bo}.
\end{proof}

\subsection{Proof of \autoref{thr:subg}}
\label{sec:proof-subg}
To prove \autoref{thr:subg}, we will make use of the following theoretical background.

\begin{definition}[Bounded differences property]\label{def:b-diff}
Let $\location = [\slocation_1, \dots, \slocation_d]^\transpose$ and: \begin{equation}
\location'_i = [\slocation_1, \dots, \slocation_{i-1}, \slocation'_i, \slocation_{i+1}, \dots, \slocation_d]^\transpose~,
\end{equation}
where $\slocation_i, \slocation'_i \in\locDomain_i \subset \R$ and $\locDomain = (\locDomain_1 \times \dots \times \locDomain_d)$. A function $f:\locDomain\to \R$ has the bounded differences property if:
\begin{equation}
|f(\location) - f(\location'_i)| \leq c_i, ~ \forall i \in \{1,\dots,d\} ~,
\end{equation}
where $c_i$ are non-negative constants.
\end{definition}

\begin{lemma}[Corollary 4.36 in \citet{Steinwart2008}]\label{thr:f-lip}
Let $f \in \Hspace_k$, where $k: \locDomain \times \locDomain \to \R$ is a twice-differentiable kernel on $\locDomain \subseteq \R^\locDim$. Then $f$ has bounded first-order partial derivatives, such that for any $\location \in \locDomain$:
\begin{equation}
\left|\frac{\partial f(\location)}{\partial \slocation_i} \right| \leq \lVert f \rVert_k \sqrt{\frac{\partial^2 k(\location,\location')}{\partial\slocation_i\partial\slocation'_i}\Bigm|_{\location'=\location}} ~.
\end{equation}
\end{lemma}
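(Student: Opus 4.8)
The plan is to prove the bound directly from the reproducing property, by showing that differentiation can be carried inside the RKHS inner product. Write $e_i$ for the $i$-th standard basis vector of $\R^\locDim$, and for a fixed $\location\in\locDomain$ introduce the local shorthand $g_{\location,i}$ for the \emph{derivative section}
\[
g_{\location,i} := \lim_{h\to 0}\frac{k(\cdot,\location+h e_i)-k(\cdot,\location)}{h}\,,
\]
the limit being taken in $\Hspace_k$. The first and most delicate task is to establish that this limit actually exists \emph{in} $\Hspace_k$, i.e. that $g_{\location,i}\in\Hspace_k$. To this end I would show that the difference quotients $h^{-1}(k(\cdot,\location+he_i)-k(\cdot,\location))$ form a Cauchy net as $h\to 0$. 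Computing the squared RKHS norm of the difference of two such quotients via the reproducing property reduces it to second-order finite differences of $k$ in both of its arguments; the assumed twice-differentiability of $k$, together with continuity of the mixed partial $\partial^2 k/\partial\slocation_i\partial\slocation'_i$, forces these second differences to converge, so completeness of $\Hspace_k$ delivers the limit $g_{\location,i}$.

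Next I would establish the derivative reproducing property: for every $f\in\Hspace_k$, the partial derivative $\partial f(\location)/\partial\slocation_i$ exists and equals $\inner{f,g_{\location,i}}_k$. This follows by applying the ordinary reproducing property $f(\location)=\inner{f,k(\cdot,\location)}_k$ to the difference quotient,
\[
\frac{f(\location+h e_i)-f(\location)}{h} = \inner{f,\tfrac{k(\cdot,\location+h e_i)-k(\cdot,\location)}{h}}_k\,,
\]
and letting $h\to 0$: the right-hand slot converges to $g_{\location,i}$ in $\Hspace_k$ by the previous step, and continuity of the inner product passes the limit through, giving $\partial f(\location)/\partial\slocation_i=\inner{f,g_{\location,i}}_k$.

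With these two facts in hand the bound is immediate. By Cauchy--Schwarz,
\[
\left|\frac{\partial f(\location)}{\partial\slocation_i}\right| = \left|\inner{f,g_{\location,i}}_k\right| \leq \norm{f}_k\,\norm{g_{\location,i}}_k\,.
\]
It remains to identify $\norm{g_{\location,i}}_k$. Applying the derivative reproducing property a second time — now to the function $g_{\location,i}$ itself, differentiating in its remaining argument — collapses $\norm{g_{\location,i}}_k^2=\inner{g_{\location,i},g_{\location,i}}_k$ to the mixed second partial derivative of $k$ evaluated on the diagonal, namely $\partial^2 k(\location,\location')/\partial\slocation_i\partial\slocation'_i\,\big|_{\location'=\location}$. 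Substituting this into the Cauchy--Schwarz estimate yields exactly the claimed inequality.

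The main obstacle is the first step: justifying that the kernel-section difference quotients converge in the RKHS \emph{norm}, not merely pointwise. Pointwise differentiability of $\slocation\mapsto k(\location',\location)$ is immediate from twice-differentiability of $k$, but convergence in $\Hspace_k$ is strictly stronger, and it is precisely this convergence that licenses moving the derivative inside $\inner{\cdot,\cdot}_k$; bounding the norm of the difference of two quotients by continuous second differences of $k$ is the technical heart of the argument. Everything downstream — the derivative reproducing property, Cauchy--Schwarz, and the norm computation — is then routine.
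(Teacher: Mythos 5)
Your proposal is correct and follows essentially the same route as the source: the paper itself gives no proof of this lemma (it imports it verbatim as Corollary 4.36 of \citet{Steinwart2008}), and your argument---RKHS-norm convergence of the kernel-section difference quotients, the derivative reproducing property $\partial f(\location)/\partial\slocation_i = \inner{f, g_{\location,i}}_k$, Cauchy--Schwarz, and the identification $\norm{g_{\location,i}}_k^2 = \frac{\partial^2 k(\location,\location')}{\partial\slocation_i\partial\slocation'_i}\big\vert_{\location'=\location}$---is precisely the standard proof of that corollary via the derivative reproducing property (Lemma 4.34 in the same book). The one caveat, which you yourself flag, is that your Cauchy-net step relies on continuity of the mixed second partial, i.e.\ the reference's hypothesis of a twice \emph{continuously} differentiable kernel, which is marginally stronger than the bare ``twice-differentiable'' wording of the statement.
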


\begin{lemma}[Theorem 5.5 in \citet{Boucheron2013}]\label{thr:tis}
Let $\random{\location} \sim \normal(\vec{0},\mat{I})$ be an $\R^\locDim$-valued standard Gaussian random vector. Let $f:\R^\locDim\to\R$ denote a $\Lipschitz$-Lipschitz function, i.e.:
\begin{equation}
|f(\location)-f(\location')| \leq \Lipschitz \lVert \location - \location' \rVert_2, ~\forall \location, \location' \in \R^\locDim ~.
\end{equation}
Then, for all $\lambda \in \R$:
\begin{equation}
\expectation[e^{\lambda (f(\random{\location}) - \expectation[f(\random{\location})])}] \leq e^{\frac{1}{2}\lambda^2 \Lipschitz^2} ~. \label{eq:tis}
\end{equation}
\end{lemma}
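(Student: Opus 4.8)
The plan is to establish the sharp sub-Gaussian constant $\tfrac12$ via the \emph{Herbst argument}, whose only nontrivial ingredient is the Gaussian logarithmic Sobolev inequality. First I would reduce to a convenient regularity class: an $\Lipschitz$-Lipschitz $f$ can be approximated by smooth functions (e.g. by convolving with a Gaussian mollifier, or by applying the heat semigroup) that remain $\Lipschitz$-Lipschitz and satisfy $\sup_\location\norm{\nabla f(\location)}\leq\Lipschitz$; since the bound in \autoref{eq:tis} is uniform in the smoothing parameter, it passes to the limit by Fatou's lemma (using that $\expectation[f_\epsilon(\random{\location})]\to\expectation[f(\random{\location})]$). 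I would also centre $f$, replacing it by $f-\expectation[f(\random{\location})]$, which is still $\Lipschitz$-Lipschitz, so that it suffices to prove $\log\expectation[e^{\lambda f(\random{\location})}]\leq\lambda^2\Lipschitz^2/2$; finiteness of the moment generating function on all of $\R$ is then a by-product of the argument.

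Next comes the Herbst computation. Writing $F(\lambda):=\expectation[e^{\lambda f(\random{\location})}]$ and applying the Gaussian log-Sobolev inequality $\operatorname{Ent}(g^2)\leq 2\,\expectation[\norm{\nabla g}^2]$ to $g=e^{\lambda f/2}$, the left-hand side equals $\lambda F'(\lambda)-F(\lambda)\log F(\lambda)$ (since $g^2\log g^2=\lambda f e^{\lambda f}$), while $\norm{\nabla g}^2=\tfrac{\lambda^2}{4}e^{\lambda f}\norm{\nabla f}^2\leq\tfrac{\lambda^2\Lipschitz^2}{4}e^{\lambda f}$ bounds the right-hand side by $\tfrac{\lambda^2\Lipschitz^2}{2}F(\lambda)$. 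Setting $H(\lambda):=\log F(\lambda)$ and dividing by $\lambda F(\lambda)$ for $\lambda>0$ yields $\tfrac{d}{d\lambda}\!\big(H(\lambda)/\lambda\big)\leq\Lipschitz^2/2$. Since $H(0)=0$ and $H(\lambda)/\lambda\to H'(0)=\expectation[f(\random{\location})]=0$ as $\lambda\downarrow 0$, integrating from $0$ to $\lambda$ gives $H(\lambda)\leq\lambda^2\Lipschitz^2/2$, which is the claimed bound for $\lambda>0$; applying the same argument to the $\Lipschitz$-Lipschitz function $-f$ covers $\lambda<0$, and $\lambda=0$ is trivial.

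The main obstacle is the Gaussian log-Sobolev inequality itself, the one genuinely deep input. I would obtain it by tensorization: the one-dimensional two-point (Bernoulli) log-Sobolev inequality on $\{-1,+1\}$ tensorizes to the discrete cube, and the central limit theorem (normalised sums of Rademacher variables converging to $\normal(0,1)$) transfers it to the standard Gaussian on $\R$, after which a further tensorization over coordinates yields the inequality on $\R^\locDim$ with the stated constant. Alternatively one may invoke the Ornstein--Uhlenbeck semigroup together with the Bakry--\'Emery criterion, or simply cite Gross's theorem, as is implicit in listing this result among the known background facts. It is worth noting that an elementary Gaussian interpolation (Maurey--Pisier) argument would avoid the log-Sobolev inequality entirely but only yields the weaker constant $\pi^2/8$ in place of $\tfrac12$; securing the sharp constant in \autoref{eq:tis} is precisely what forces the log-Sobolev route. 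The remaining care is purely technical: justifying differentiation under the expectation in $F'(\lambda)$ and the mollification limit, both standard once the uniform control $\norm{\nabla f}\leq\Lipschitz$ is in place.
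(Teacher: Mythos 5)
The paper does not prove this lemma at all---it is quoted as known background, citing Theorem 5.5 of \citet{Boucheron2013}---and your Herbst-argument proof via the Gaussian logarithmic Sobolev inequality (with mollification to reduce to smooth $f$, centring, and the $-f$ trick for $\lambda<0$) is precisely the proof given in that reference, so the proposal is correct and takes essentially the same route as the paper's source. One cosmetic slip worth fixing: to obtain $\frac{d}{d\lambda}\bigl(H(\lambda)/\lambda\bigr)\leq\Lipschitz^2/2$ from $\lambda F'(\lambda)-F(\lambda)\log F(\lambda)\leq\tfrac{1}{2}\lambda^2\Lipschitz^2 F(\lambda)$, you must divide by $\lambda^2 F(\lambda)$, not by $\lambda F(\lambda)$.
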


Now we can proceed to the proof of \autoref{thr:subg}, which is restated below.
\subgnoise*
\begin{proof}
The following proof is split in two parts. The derivation firstly covers the case where the inputs follow a Gaussian distribution and then the case for arbitrary probability distributions with compact support.

\paragraph{(\ref{thr:subg-g}) Gaussian inputs:}
In the case of Gaussian inputs, \autoref{thr:subg} is a direct consequence of \autoref{thr:tis} when applied to functions $f \in \Hspace_k$. Notice that, by the definition of $\Hspace_k$, any $f$ in it is continuously differentiable and Lipschitz continuous according to \autoref{thr:f-lip}. All we have to do is to generalise the inequality in \autoref{eq:tis} for the case of general Gaussian random vectors $\random{\location} \sim \normal(\locMean,\bm{\Sigma})$.

If $\random{\location}^s$ is a standard Gaussian random vector, $\random{\location} = \locMean + \mat{A}\random{\location}^s$, where $\bm{\Sigma} = \mat{A}\mat{A}^\transpose$, due to the translational and rotational invariance of Gaussian random vectors. We can define a function $g$, such that:
\begin{equation}
g(\random{\location}^s) = f(\locMean + \mat{A}\random{\location}^s) = f(\random{\location})~. \label{eq:g}
\end{equation}
Since $f$ is Lipschitz continuous, $g$ also is, for some Lipschitz constant $\Lipschitz_g$. Then we can apply \autoref{thr:tis} to $g$, which yields:
\begin{equation}
\expectation\left[e^{\lambda (g(\random{\location}^s) - \expectation[g(\random{\location}^s)])}\right] \leq e^{\frac{1}{2}\lambda^2 \Lipschitz_g^2} ~. \label{eq:g-subg}
\end{equation}
In addition, by definition (\autoref{eq:g}), $g(\random{\location}^s)$ and $f(\random{\location})$ follow the same distribution, so that $\expectation[f(\random{\location})] = \expectation[g(\random{\location}^s)]$. As a result, $\Delta f_\pMeasure = f(\random{\location}) - \expectation[f(\random{\location})]$ is $\Lipschitz_g$-sub-Gaussian, according to \autoref{def:subg}.

Now $\Lipschitz_g$ is any constant uniformly upper-bounding the Euclidean norm of $g$'s gradient, and: 
\begin{equation}
\begin{split}
\rVert \nabla g \lVert_2^2 &= \lVert \mat{A}^\transpose \nabla f \rVert_2^2\\
&= \nabla f^\transpose \mat{A} \mat{A}^\transpose \nabla f\\
&= \nabla f^\transpose \bm{\Sigma} \nabla f\\
\end{split}
\end{equation}
Without loss of generality, let's assume that $\bm{\Sigma}$ is a matrix of diagonal entries $\sigma_i^2, 1\leq i \leq d$. Then we have that:
\begin{equation}
\norm{\nabla g}_2^2= \sum_{i=1}^{d} \sigma_i^2 \left|\frac{\partial f}{\partial \slocation_i} \right|^2 \leq \Lipschitz_f^2 \mathrm{tr}(\bm{\Sigma}) ~,
\label{eq:g-grad-bound}
\end{equation}
where $\Lipschitz_f = \lVert f \rVert_k \Lipschitz_k$. Therefore, the inequality in \autoref{eq:g-subg} holds for $\Lipschitz_g = \Lipschitz_f \sqrt{(\mathrm{tr}(\bm{\Sigma}))}$.

For a non-diagonal $\bm{\Sigma}$, by spectral decomposition, we have that $\bm{\Sigma} = \mat{V}\bm{\Lambda}\mat{V}^\transpose$, where $\bm{\Lambda}$ is a diagonal matrix composed of $\bm{\Sigma}$'s eigenvalues and $\mat{V}\mat{V}^\transpose = \mat{I}$. Observe that the result in \autoref{eq:g-grad-bound} would also hold for a zero-mean Gaussian random vector $\random{\location}^v$ with covariance matrix $\bm{\Lambda}$. Then we could define $h(\random{\location}^v) = f(\locMean + \mat{V}\random{\location}^v)$ and follow similar steps to the ones we took for $g$. However, $f$ and $h$, as defined, have the same Lipschitz constant, since:
\begin{equation}
\lVert \mat{V}\location - \mat{V}\location' \rVert_2^2 = (\location - \location')^\transpose \mat{V}^\transpose \mat{V} (\location - \location') = \rVert \location - \location' \lVert_2^2 ~,
\end{equation}
where we applied $\mat{V}^\transpose \mat{V} = \mat{V}\mat{V}^\transpose = \mat{I}$. In addition, as $\bm{\Sigma}$ is positive definite, $\mathrm{tr}(\bm{\Sigma}) = \mathrm{tr}(\bm{\Lambda})$. Therefore, the same result in \autoref{eq:g-grad-bound} holds for general $\bm{\Sigma}$ and $\locMean$, which can also be seen as a consequence of the translational and rotational invariance of Gaussian random vectors. Making $\sigma_F = \Lipschitz_g = \rVert f \lVert_k \Lipschitz_k \mathrm{tr}(\bm{\Sigma})^{1/2}$ concludes the first part of the proof.

\paragraph{(\ref{thr:subg-b}) Distributions with compact support:} By \autoref{thr:f-lip}, we can observe that $f \in \Hspace_k(\locDomain)$ is Lipschitz continuous with respect to the 1-norm on $\R^\locDim$, in particular:
\begin{equation}
|f(\location) - f(\location')| \leq \lVert f \rVert_k \Lipschitz_k \lVert \location - \location' \rVert_1, ~\forall \location,\location' \in \R^\locDim ~,
\end{equation}
where $\Lipschitz_k \geq 0$ is any constant such that $\Lipschitz_k^2 \geq \underset{\location \in \locDomain}{\sup}\underset{i\in[d]}{\sup}\frac{\partial^2 k(\location,\location')}{\partial\slocation_i\partial\slocation'_i}|_{\location=\location'}$. Therefore, according to \autoref{def:b-diff}, $f$ satisfies the bounded differences property for any $\location$ in the support of $\pMeasure$ with $c_i = \lVert f \rVert_k \Lipschitz_k \sigma_i$. Applying McDiarmid's inequality \citep{McDiarmid1989}, we have that:
\begin{equation}
\prob{|f(\random{\location})- \expectation_\pMeasure(f)| \geq t} \leq 2\exp\left(-\frac{2t^2}{\lVert f \rVert_k^2 \Lipschitz_k^2 \sum_{i=1}^{d} \sigma_i^2}\right)~.
\end{equation}
As a result, $\Delta f_\pMeasure$ is $\sigma_F$-sub-Gaussian with $\sigma_F = \frac{1}{2}\lVert f \rVert_k \Lipschitz_k \sqrt{\sum_{i=1}^{d} \sigma_i^2}$, according to \autoref{def:subg} and Lemma 2.2 in \citet{Boucheron2013}. This concludes the proof.
\end{proof}

\subsection{Proof of \autoref{thr:noisy-f}}
\label{sec:proof-noisy-f}
\thrnoisyf*
\begin{proof}
To prove this result, we will consider properties of the inner product in $\Hspace_k$ when $k$ is translation invariant. These properties essentially allow us to transfer the noise in the evaluation of $f$ to $f$ itself and then represent $g$ as the expectation of this noisy version of $f$. Similar to the proof of \autoref{thr:exp-f}, we start by defining an operator on $\Hspace_k^0$ (see \autoref{eq:pre-rkhs}) and then extend it to $\Hspace_k$ by \autoref{thr:ble}.

To develop the proof, we need to represent $f$ in terms of the kernel $k$. Let $f = \sum_{i=1}^{\nFeatures} \alpha_{i} k(\cdot,\location_{i}) \in \Hspace_k^0$, which is the pre-Hilbert space of $k$. Considering the evaluation of the expected value of $f$, we have that:
\begin{equation}
\begin{split}
\forall\location\in\Sspace, ~\expectation_{\pMeasure_\location}[f] &= \expectation_{\locNoise\sim\pMeasure_\qry}[f(\location+\locNoise)]\\
&= \expectation_{\locNoise\sim\pMeasure_\qry}\left[\sum_{i=1}^{\nFeatures} \alpha_{i} k(\location+\locNoise,\location_{i})\right]  ~.
\end{split}
\label{eq:noisy-f-exp}
\end{equation}
For a fixed $\locNoise\in\R^\locDim$, we have that $k(\location+\locNoise, \location') = k(\location,\location'-\locNoise), ~\forall \location,\location'\in\locDomain$, by translation invariance. Applying this property, we obtain:
\begin{equation}
\begin{split}
f(\location+\locNoise) &= \sum_{i=1}^{\nFeatures} \alpha_{i} k(\location+\locNoise,\location_{i})\\
&= \sum_{i=1}^{\nFeatures} \alpha_{i} k(\location,\location_{i}-\locNoise)\\
&= \inner{\sum_{i=1}^{\nFeatures} \alpha_{i} k(\cdot,\location_{i}-\locNoise)~,~k(\cdot,\location)}_k\\
&= f^\locNoise(\location)\\
\end{split}
\end{equation}
where $f^\locNoise := \sum_{i=1}^{\nFeatures} \alpha_{i} k(\cdot,\location_{i}-\locNoise)$ is equivalent to a version of $f$ with inputs shifted by $\locNoise$. As the shift $\locNoise$ is the same for all $\location_i, ~i\in\{1,\dots,\nFeatures\}$, the norm is unaffected:
\begin{equation}
\begin{split}
\norm{f^\locNoise}_k^2 = \inner{f^\locNoise,f^\locNoise}_k &= \sum_{i=1}^{\nFeatures} \sum_{j=1}^{\nFeatures} \alpha_{i} \alpha_{j} k(\location_{i}-\locNoise,\location_{j}-\locNoise)\\
&= \sum_{i=1}^{\nFeatures} \sum_{j=1}^{\nFeatures} \alpha_{i} \alpha_{j} k(\location_{i},\location_{j})\\
&= \inner{f,f}_k = \norm{f}_k^2~,
\label{eq:proof-noisy-f-norms}
\end{split}
\end{equation}
where the second equality follows by translational invariance. Defining the mapping $f\mapsto f^\locNoise$ as an operator from $\Hspace_k^0$ to $\Hspace_k$, one can easily show that this operator is linear and bounded. Applying \autoref{thr:ble}, then we have that $f\mapsto f^\locNoise$ is actually well defined over the entire $\overline{\Hspace_k^0}=\Hspace_k$.

Now we can return to the derivation in \autoref{eq:noisy-f-exp}. Since $k$ is measurable, we have that $\locNoise\mapsto f^\locNoise$ defines a $\Hspace_k$-valued random variable \citep[Ch. 4]{Berlinet2004}. In addition, as $\norm{f}_k$ is finite, $\locNoise\mapsto\norm{f^\locNoise}_k$ is bounded, so that expectations are well defined as Bochner integrals \citep[see][Ch. 4, Sec. 5]{Berlinet2004}. Applying these results to \autoref{eq:noisy-f-exp} yields:
\begin{equation}
\begin{split}
\forall f \in \Hspace_k, \forall\location\in\Sspace, ~\expectation_{\pMeasure_\location}[f] &= \expectation_{\locNoise\sim\pMeasure_\qry}[f^\locNoise(\location)]\\ 
&=\inner{\expectation_{\locNoise\sim\pMeasure_\qry}[f^\locNoise],k(\cdot,\location)}_k~.
\end{split}
\end{equation}
Defining $g':= \expectation_{\locNoise\sim\pMeasure_\qry}[f^\locNoise]$ and restricting the domain to $\Sspace$, set $g:=g'\vert_\Sspace\in\Hspace_k(\Sspace)$. By the boundedness of the Bochner integral \citep[see][Ch. 2]{Mandrekar2015}, which defines $\expectation[f^\locNoise]$, we know that: 
\begin{equation}
\norm{g'}_k = \norm{\expectation[f^\locNoise]}_k \leq \expectation[\norm{f^\locNoise}_k] = \norm{f}_k~.
\end{equation}
Regarding the norm of the domain-restricted function, we then have that \citep{Aronszajn1950}:
\begin{equation}
\norm{g}_{\Hspace_k(\Sspace)} = \inf_{h\in\Hspace_k:h\vert_\Sspace=g}\norm{h}_k \leq \norm{g'}_k \leq \norm{f}_k~.
\end{equation}
The result in \autoref{thr:noisy-f} immediately follows, which concludes the proof.
\end{proof}

\subsection{Proof of \autoref{thr:main}}
The proof for the main result concerning uGP-UCB will make use of the following background.

\begin{lemma}[name={Theorem 2.9 in \citet{Saitoh2016}}]
\label{thr:rkhs-composition}
Consider a kernel $k:\anydomain\times\anydomain\to\R$ and an arbitrary mapping $\anyfunction:\anyspace\to\anydomain$. Set
\begin{equation}
\set{Z}_\anyfunction := \bigcap_{\anyelement\in\anyspace} \nullspace\left(\operator{E}_{\anyfunction(\anyelement)}\right) \subset \Hspace_{k}~,
\end{equation}
where, given $w\in\anydomain$, $\operator{E}_w$ denotes the evaluation functional and $\nullspace(\operator{E}_w)$ denotes the null space of $\operator{E}_w$. Let $\projection$ denote the projection from $\Hspace_k$ to $\set{Z}^{\perp}_\anyfunction$, the orthogonal complement of $\set{Z}_\anyfunction$ in $\Hspace_k$. Defining $k\circ \anyfunction:\anyspace\times\anyspace\to\R$ by $k\circ\anyfunction(\anyelement,\anyelement') = k(\anyfunction(\anyelement),\anyfunction(\anyelement'))$, for $\anyelement,\anyelement'\in\anyspace$, we have the pullback $\Hspace_{k\circ\anyfunction}$ described as:
\begin{equation}
\Hspace_{k\circ\anyfunction} = \{f\circ\anyfunction \mid f \in \Hspace_k \}~,
\end{equation}
which is equipped with an inner product satisfying:
\begin{equation}
\inner{f\circ\anyfunction,g\circ\anyfunction}_{k\circ\anyfunction} = \inner{\projection f,\projection g}_k
\end{equation}
for all $f,g\in\Hspace_k$.
\end{lemma}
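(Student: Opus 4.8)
The plan is to realise the claimed space as the image of $\Hspace_k$ under the pullback operator $\operator{T}\colon\Hspace_k\to\R^{\anyspace}$ defined by $\operator{T}f := f\circ\anyfunction$, to equip that image with the stated inner product, and then to identify it with $\Hspace_{k\circ\anyfunction}$ by checking that $k\circ\anyfunction$ reproduces it. Uniqueness of the reproducing kernel (Moore--Aronszajn) then closes the argument. The whole construction turns on understanding $\operator{T}$, and in particular its kernel, so I would analyse that first.

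First I would pin down $\set{Z}_\anyfunction$ and its orthogonal complement. By the reproducing property $f(\anyfunction(\anyelement)) = \inner{f,k(\cdot,\anyfunction(\anyelement))}_k$, each evaluation functional acts as $\operator{E}_{\anyfunction(\anyelement)}f = \inner{f,k(\cdot,\anyfunction(\anyelement))}_k$, so $\nullspace(\operator{E}_{\anyfunction(\anyelement)}) = \{k(\cdot,\anyfunction(\anyelement))\}^{\perp}$. Intersecting over $\anyelement\in\anyspace$ gives $\set{Z}_\anyfunction = \bigl(\overline{\lspan\{k(\cdot,\anyfunction(\anyelement))\mid\anyelement\in\anyspace\}}\bigr)^{\perp}$, and hence $\set{Z}_\anyfunction^{\perp} = \overline{\lspan\{k(\cdot,\anyfunction(\anyelement))\mid\anyelement\in\anyspace\}}$. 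The same computation shows $\ker\operator{T} = \{f\in\Hspace_k\mid f(\anyfunction(\anyelement))=0~\forall\anyelement\in\anyspace\} = \set{Z}_\anyfunction$, so $\operator{T}$ restricted to $\set{Z}_\anyfunction^{\perp}$ is injective and has the same range as $\operator{T}$.

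Next I would place the inner product $\inner{\operator{T}f,\operator{T}g}_{k\circ\anyfunction} := \inner{\projection f,\projection g}_k$ on $\operatorname{ran}\operator{T}$ and verify that it is well defined: $\operator{T}f = \operator{T}g$ iff $f-g\in\ker\operator{T} = \set{Z}_\anyfunction$ iff $\projection f = \projection g$, so the right-hand side depends only on $\operator{T}f$ and $\operator{T}g$. With this definition $\operator{T}|_{\set{Z}_\anyfunction^{\perp}}$ is an isometric isomorphism onto $\operatorname{ran}\operator{T}$; since $\set{Z}_\anyfunction^{\perp}$ is a closed subspace of the Hilbert space $\Hspace_k$, completeness transports across this isometry and makes $(\operatorname{ran}\operator{T},\inner{\cdot,\cdot}_{k\circ\anyfunction})$ a genuine Hilbert space of functions on $\anyspace$.

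Finally I would verify the reproducing property. For fixed $\anyelement'\in\anyspace$ the function $(k\circ\anyfunction)(\cdot,\anyelement') = \operator{T}\,k(\cdot,\anyfunction(\anyelement'))$ lies in $\operatorname{ran}\operator{T}$, and because $k(\cdot,\anyfunction(\anyelement'))\in\set{Z}_\anyfunction^{\perp}$ we have $\projection k(\cdot,\anyfunction(\anyelement')) = k(\cdot,\anyfunction(\anyelement'))$; therefore $\inner{\operator{T}f,(k\circ\anyfunction)(\cdot,\anyelement')}_{k\circ\anyfunction} = \inner{\projection f,k(\cdot,\anyfunction(\anyelement'))}_k = \inner{f,k(\cdot,\anyfunction(\anyelement'))}_k = f(\anyfunction(\anyelement')) = (\operator{T}f)(\anyelement')$, where the middle equality uses $k(\cdot,\anyfunction(\anyelement'))\perp\set{Z}_\anyfunction$. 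Thus $k\circ\anyfunction$ is the reproducing kernel of $\operatorname{ran}\operator{T}$, and uniqueness of the RKHS identifies $\operatorname{ran}\operator{T}$ with $\Hspace_{k\circ\anyfunction}$ and the constructed inner product with $\inner{\cdot,\cdot}_{k\circ\anyfunction}$. The main obstacle is not this final computation, which is short, but the bookkeeping that makes it legitimate: establishing the exact identity $\ker\operator{T}=\set{Z}_\anyfunction$, confirming that $\set{Z}_\anyfunction^{\perp}$ is \emph{precisely} the closed span of the feature maps $k(\cdot,\anyfunction(\anyelement))$, and checking the well-definedness and completeness of the quotient inner product. Once those are secured, everything else is routine.
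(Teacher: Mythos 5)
The paper never proves this lemma: it is imported verbatim as Theorem 2.9 of \citet{Saitoh2016} and used as a black box in the proof of Theorem 5, so there is no internal proof to compare against. Your argument is a correct, self-contained proof of the cited result, and it is essentially the standard one behind the theorem: identify $\ker\operator{T} = \set{Z}_\anyfunction$ for the pullback operator $\operator{T}f = f\circ\anyfunction$, observe that $\set{Z}_\anyfunction^{\perp} = \overline{\lspan\{k(\cdot,\anyfunction(\anyelement))\mid\anyelement\in\anyspace\}}$, transport the inner product through the isometry $\operator{T}|_{\set{Z}_\anyfunction^{\perp}}$ onto $\operatorname{ran}\operator{T}$, verify that $k\circ\anyfunction$ reproduces this space, and conclude by Moore--Aronszajn uniqueness. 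The delicate points you single out are indeed the ones that need care, and you handle each correctly: well-definedness of the quotient inner product via $\ker\operator{T}=\set{Z}_\anyfunction \Leftrightarrow \projection f = \projection g$, completeness of $\operatorname{ran}\operator{T}$ inherited from the closed subspace $\set{Z}_\anyfunction^{\perp}$, and the identity $\projection k(\cdot,\anyfunction(\anyelement')) = k(\cdot,\anyfunction(\anyelement'))$ that makes the reproducing computation go through.
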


\thrmain*
\begin{proof}
Let $\measure{q}:\location\mapsto\pMeasure^\qry_\location$ denote the map from target to query location distribution. We can then define a kernel $\kp\circ\measure{q}(\location,\location'):=\kp(\measure{q}(\location),\measure{q}(\location'))=\kp(\pMeasure^\qry_\location,\pMeasure^\qry_{\location'})$, $\location,\location'\in\Sspace$. According to \autoref{thr:rkhs-composition}, the RKHS associated with $\kp\circ\measure{q}$ is given by:
\begin{equation}
\Hspace_{\kp\circ\measure{q}} = \left\lbrace \uncertain{g}\circ\measure{q} \,\middle|\, \uncertain{g} \in\Hspace_{\kp} \right\rbrace~,
\label{eq:proof-main-rkhs}
\end{equation}
equipped with an inner product whose associated norm is such that:
\begin{equation}
\norm{\uncertain{g}\circ\measure{q}}_{\kp\circ\measure{q}} = \norm{\projection\uncertain{g}}_{\kp} \leq \norm{\uncertain{g}}_{\kp}~,
\label{eq:proof-main-norms}
\end{equation}
for any $\uncertain{g}\in\Hspace_{\kp}$, where $\projection$ follows the definition in \autoref{thr:rkhs-composition}.

Considering the RKHS in \autoref{eq:proof-main-rkhs}, the result in \autoref{thr:main} follows after a few steps. Firstly, by \autoref{thr:exp-f}, for any $f\in\Hspace_k$, there is a unique $\uncertain{f}\in\Hspace_{\kp}$, such that:
\begin{equation}
\uncertain{f}\circ\measure{q}(\location)=\uncertain{f}(\pMeasure^\qry_\location) = \expectation[f(\random{\location})|\location]\,, \forall \location\in\Sspace ~.
\end{equation}
Then, letting $g := \uncertain{f}\circ\measure{q}$ and using $\kp\circ\measure{q}$ as the GP kernel, we apply \autoref{thr:sbo} to obtain a cumulative regret bound for $g$ as an objective, analogously to \autoref{sec:proof-bo}. From \autoref{eq:proof-main-norms} and \autoref{thr:exp-f}, we also have that:
\begin{equation}
\norm{g}_{\kp\circ\measure{Q}} \leq \norm{\uncertain{f}}_{\kp} = \norm{f}_k \leq \bound~.
\end{equation}
Lastly, to avoid needing an explicit formulation for $\uncertain{\mig}_{\iterIdx-1}^\qry$ to set $\beta_\iterIdx$, the known current information gain $\mutinfo{\observations_{\iterIdx-1};\vec{\uncertain{f}}_{\iterIdx-1} | \{\pMeasure^\qry_{\location_\primIdx}\}_{\primIdx=1}^{\iterIdx-1}}$ was instead used in the formulation of $\beta_\iterIdx$. This replacement maintains the same bounds obtained by \citet[Appendix C]{Chowdhury2017} and applied in \autoref{thr:sbo}.

For a given $\delta \in (0,1)$, \citeauthor{Chowdhury2017} arrive at the following result regarding a GP model with covariance $k:\Sspace\times\Sspace\to\R$ and any function $g\in\Hspace_{k}$:
\begin{multline}
\forall \iterIdx\geq 0,\forall\location\in\Sspace\,:
|\gpMean_\iterIdx(\location) - g(\location)| \leq\\
\sigma_\iterIdx(\location)\left(\bound+\sigma_\obsNoise\sqrt{2\log\frac{\sqrt{|(1+\factor)\eye+\mat{k}_\iterIdx}}{\delta}}\right)
\label{eq:proof-main-ucb}
\end{multline}
with probability greater than $1-\delta$, where we adjusted notation according to our setup. Observing that:
\begin{equation}
|(1+\factor)\eye + \mat{K}_\iterIdx| = |(\eye+(1+\factor)^{-1}\mat{K}_\iterIdx)||(1+\factor)\eye|~,
\end{equation}
the authors go on to show that:
\begin{equation}
\begin{split}
\log(|(1+\factor)\eye + \mat{K}_\iterIdx|) &= \log(|(\eye+(1+\factor)^{-1}\mat{K}_\iterIdx)|)\\&\quad+\iterIdx\log(1+\factor)\\&\leq 2\mig_\iterIdx + \factor\iterIdx~.
\end{split}
\label{eq:proof-main-mig}
\end{equation}
Choosing $\factor=2/\nIterations$ in the last result and replacing it into \autoref{eq:proof-main-ucb} leads to the formulation for $\beta_\iterIdx$ in \autoref{thr:sbo}. However, notice that:
\begin{equation}
\log(|(\eye+(1+\factor)^{-1}\mat{K}_\iterIdx)|) = 2\mutinfo{\observations_{\iterIdx};\vec{g}_\iterIdx|\{\location_{i}\}_{i=1}^\iterIdx}~.
\end{equation}
Using this identity in \autoref{eq:proof-main-mig} and replacing it into \autoref{eq:proof-main-ucb} yields the formulation for $\beta_\iterIdx$ in \autoref{thr:main}.

As in \autoref{sec:proof-bo}, the result in \autoref{thr:main} follows by noticing that the cumulative regret for $g$, as defined, is equivalent to the uncertain-inputs cumulative regret for $f$.
\end{proof}

\subsection{Proof of \autoref{thr:ig-iid}}
\label{sec:proof-ig-iid}
\thrigiid*
\begin{proof}
Let's consider the definitions of the information gain bounds. In the standard, deterministic-inputs case, the maximum information gain after $\nIterations$ iterations for a model $\gp(0,k)$ is given by:
\begin{equation}
\gamma_\nIterations = \max_{\locSet\subset\Sspace:|\locSet| = \nIterations} ~ \frac{1}{2} \log |\eye+\lambda^{-1}\mat{K}_\locSet| ~, 
\end{equation}
where $\mat{K}_\locSet = [k(\location,\location')]_{\location,\location'\in\locSet}$. In the case of $\gp(0,\kp)$ taking inputs from $\Pspace_\locNoise$, we have:
\begin{equation}
\uncertain{\gamma}_\nIterations(\Pspace_\locNoise) = \sup_{\pSet\subset\Pspace_\locNoise:|\pSet|=\nIterations}\frac{1}{2} \log |\eye+\lambda^{-1}\Kp_\pSet| ~, 
\end{equation}
where $\Kp_\pSet = [k(\pMeasure,\pMeasure')]_{\pMeasure,\pMeasure'\in\pSet}$. Both cases have the same parameter $\lambda>0$.

Considering the former definitions, observe that, if one can always find a set $\locSet\subset\Sspace$ that provides larger information gain than $\pSet$, for every choice of $\pSet\subset\Pspace_{\locNoise}$, $\gamma_\nIterations$ will then be larger than $\uncertain{\gamma}_\nIterations(\Pspace_\locNoise)$. The information gain depends on the determinants of the matrices $\eye+\lambda^{-1}\mat{K}_\locSet$ and $\eye+\lambda^{-1}\Kp_\pSet$, which is related to the positive-definiteness of both matrices.

A classic result in matrix analysis states that, if two $\nIterations$-by-$\nIterations$-matrices $\mat{A}$ and $\mat{B}$ are positive definite, and $\mat{A}-\mat{B}$ is positive semi-definite, their determinants satisfy $|\mat{A}|\geq|\mat{B}|$ \citep[see][Cor. 7.7.4]{Horn1985}. Recall that a matrix $\mat{A}\in\R^{\nIterations\times\nIterations}$ is positive semi-definite if and only if $\forall \vec{\alpha}\in\R^\nIterations,\, \vec{\alpha}^\transpose\mat{A}\vec{\alpha} \geq 0$, and positive definite if equality only holds for $\vec{\alpha}=\vec{0}$. Hence, we shall prove that:
\begin{multline}
\forall \{\pMeasure_i\}_{i=1}^\nIterations\subset\Pspace_\locNoise,\quad \exists \{\location_{i}\}_{i=1}^\nIterations\subset\Sspace:\\ \quad \vec{\alpha}^\transpose (\mat{K}_\nIterations-\Kp_\nIterations)\vec{\alpha}\geq 0, \quad \forall \vec{\alpha} \in \R^\nIterations~,
\label{eq:proof-ig-iid-condition}
\end{multline}
where $[\mat{K}_\nIterations]_{ij} = k(\location_i,\location_j)$ and $[\Kp_\nIterations]_{ij} = \kp(\pMeasure_i,\pMeasure_j)$, $i,j \in \{1,\dots,\nIterations\}$. For two positive semi-definite matrices $\mat{A},\mat{B}\in\R^{\nIterations\times\nIterations}$, let $\mat{A}\succcurlyeq\mat{B}$ denote that $\mat{A}-\mat{B}$ is positive semi-definite. Since:
\begin{equation}
\begin{split}
\mat{K}_\nIterations \succcurlyeq \Kp_\nIterations &\implies \eye+\lambda^{-1}\mat{K}_\nIterations \succcurlyeq \eye+\lambda^{-1}\Kp_\nIterations\\
&\implies |\eye+\lambda^{-1}\mat{K}_\nIterations| \geq |\eye+\lambda^{-1}\Kp_\nIterations|\\
&\implies \log|\eye+\lambda^{-1}\mat{K}_\nIterations| \geq \log|\eye+\lambda^{-1}\Kp_\nIterations|~,
\end{split}
\end{equation}
the condition in \autoref{eq:proof-ig-iid-condition}, if satisfied, then implies that $\mig_\nIterations \geq \uncertain{\mig}_\nIterations(\Pspace_\locNoise)$.

For a given $ \{\pMeasure_i\}_{i=1}^\nIterations\subset\Pspace_\locNoise$, define $\random{\location}_i \sim \pMeasure_i \in \Pspace_{\locNoise}$, for each $i\in\{1,\dots,\nIterations\}$. By the definition of $\Pspace_{\locNoise}$, we also have that each $\random{\location}_i = \locMean_i + \locNoise_i$, with $\locMean_i\in\Sspace$ and $\locNoise_i\sim \pMeasure_{\locNoise}$. Recall that, for any $\pMeasure,\pMeasure'\in\Pspace$, $\kp(\pMeasure,\pMeasure') = \inner{\meanMap_\pMeasure,\meanMap_{\pMeasure'}}_k$ and $\meanMap_\pMeasure = \expectation[k(\cdot,\random{\location})]$, $\random{\location}\sim\pMeasure$. Then we can write:
\begin{equation}
\begin{split}
\forall \vec{\alpha}\in\R^\nIterations,\quad \vec{\alpha}^\transpose\Kp_\nIterations\vec{\alpha} &= \sum_{i=1}^\nIterations\sum_{j=1}^\nIterations \alpha_i\alpha_j \kp(\pMeasure_i,\pMeasure_j)\\
&= \sum_{i=1}^\nIterations\sum_{j=1}^\nIterations \alpha_i\alpha_j \inner{\meanMap_{\pMeasure_i},\meanMap_{\pMeasure_j}}_k\\
&= \flexnorm{\sum_{i=1}^\nIterations\alpha_i\meanMap_{\pMeasure_i}}_k^2\\
&= \flexnorm{\sum_{i=1}^\nIterations\alpha_i \expectation\left[k(\cdot,\locMean_i+\locNoise_i)\right]}_k^2~.
\end{split}
\end{equation}
Now, as $\locNoise_i$ are \iid{} random variables, for any $\locNoise\sim\pMeasure_\locNoise$, it holds that:
\begin{equation}
\forall i \in \{1,\dots,\nIterations\},\quad \expectation[k(\cdot,\locMean_i+\locNoise_i)] = \expectation[k(\cdot,\locMean_i+\locNoise)]~.
\end{equation}
Applying this identity, we have that:
\begin{equation}
\begin{split}
\forall \vec{\alpha}\in\R^\nIterations,\, \vec{\alpha}^\transpose\Kp_\nIterations\vec{\alpha} &= \flexnorm{\expectation\left[\sum_{i=1}^\nIterations\alpha_i k(\cdot,\locMean_i+\locNoise_i)\right]}_k^2\\
&= \flexnorm{\expectation\left[\sum_{i=1}^\nIterations\alpha_i k(\cdot,\locMean_i+\locNoise)\right]}_k^2\\
&\leq \expectation\left[\flexnorm{\sum_{i=1}^\nIterations\alpha_i k(\cdot,\locMean_i+\locNoise)}_k^2\right]\\
&=\expectation\left[\sum_{i=1}^\nIterations\sum_{j=1}^\nIterations \alpha_i\alpha_j k(\locMean_i+\locNoise,\locMean_j+\locNoise)\right]\\
&=\sum_{i=1}^\nIterations\sum_{j=1}^\nIterations \alpha_i\alpha_j k(\locMean_i,\locMean_j)\\
&=\vec{\alpha}^\transpose\mat{K}_\nIterations\vec{\alpha}~,
\end{split}
\end{equation}
where the first inequality follows from the boundedness of the Bochner integral \citep[Ch. 2]{Mandrekar2015}, the fourth equality follows from $k$'s translation invariance, and $\mat{K}_\nIterations$ is defined by $[\mat{K}_\nIterations]_{ij} = k(\locMean_i,\locMean_j)$. Therefore, the set of mean locations $\{\locMean_i\}_{i=1}^\nIterations$ satisfies the condition in \autoref{eq:proof-ig-iid-condition}, leading to the result in \autoref{thr:ig-iid}, which concludes the proof.
\end{proof}

\subsection{Proof of \autoref{thr:approx}}
\label{sec:proof-approx}
The proof for \autoref{thr:approx} will make use of the following background. For further details, we refer the reader to \citet{Bauer1981} and \citet{Boucheron2013}.


\begin{definition}[Absolute continuity]
A measure $\measure{V}$ on a $\sigma$-algebra $\collection{X}$ is said to be absolutely continuous relative to a measure $\measure{U}$ on $\collection{X}$ if every $\measure{U}$-null set is also a $\measure{V}$-null set.
\end{definition}

Given a measure $\measure{m}$ on $\collection{X}$, a $\measure{m}$-null set is simply any set $\anyset\in\collection{X}$, such that $\measure{m}[\anyset] = 0$.

\begin{definition}[Kullback-Leibler divergence]
Let $\pMeasure$ and $\pMeasure'$ be two probability measures on $(\locDomain,\collection{X})$. The Kullback-Leibler divergence between the two measures is defined as:
\begin{equation}
\kl{\pMeasure'}{\pMeasure} := \int \log \frac{\diff\pMeasure'}{\diff\pMeasure}\diff\pMeasure'~,
\end{equation}
case $\pMeasure'$ is absolutely continuous relative to $\pMeasure$, or $\infty$ otherwise.
\end{definition}

\begin{lemma}[Pinsker's inequality]
\label{thr:pinsker}
Let $\pMeasure$ and $\pMeasure'$ be two probability measures on $(\locDomain,\collection{X})$, and let $\measure{u}$ be a common dominating measure of $\pMeasure$ and $\pMeasure'$. Assume that $\pMeasure'$ is absolutely continuous relative to $\pMeasure$. Then it holds that:
\begin{equation}
\int_{\locDomain} \left\lvert \pDensity(\location) - \pDensity'(\location) \right\rvert \diff\measure{u}(\location) \leq \sqrt{\frac{1}{2}\kl{\pMeasure'}{\pMeasure}}~,
\end{equation}
where $\pDensity = \frac{\diff\pMeasure}{\diff\measure{u}}$ and $\pDensity' = \frac{\diff\pMeasure'}{\diff\measure{u}}$ are the respective densities of each probability measure.
\end{lemma}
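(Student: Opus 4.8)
The plan is to prove Pinsker's inequality by the classical two-step reduction: first collapse the arbitrary measurable space $(\locDomain,\collection{X})$ to a two-point space in a way that leaves the total-variation gap unchanged while only decreasing the divergence, and then dispatch the resulting scalar inequality by elementary calculus. Throughout, the hypothesis that $\pMeasure'$ is absolutely continuous relative to $\pMeasure$ (together with the common dominating measure $\measure{u}$) guarantees that the densities $\pDensity,\pDensity'$ and the integrand $\log\frac{\diff\pMeasure'}{\diff\pMeasure}$ are well defined $\measure{u}$-almost everywhere, with the usual conventions on $\pMeasure$-null sets.

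First I would introduce the discriminating set $A := \{\location\in\locDomain : \pDensity(\location)\geq\pDensity'(\location)\}$. Since both densities integrate to one against $\measure{u}$, the excess of $\pDensity$ over $\pDensity'$ on $A$ equals the excess of $\pDensity'$ over $\pDensity$ on its complement, so that $\int_\locDomain |\pDensity-\pDensity'|\diff\measure{u} = 2\,|\pMeasure[A]-\pMeasure'[A]|$; equivalently, the total-variation distance $\tfrac12\int_\locDomain|\pDensity-\pDensity'|\diff\measure{u}$ equals $|a-b|$ with $a:=\pMeasure[A]$ and $b:=\pMeasure'[A]$, and $A$ is the set attaining it. Next I would coarse-grain the divergence using the partition $\{A,A^c\}$: applying the log-sum inequality separately to the two blocks gives $\int_A \pDensity'\log\frac{\pDensity'}{\pDensity}\diff\measure{u} \geq b\log\frac{b}{a}$ and $\int_{A^c}\pDensity'\log\frac{\pDensity'}{\pDensity}\diff\measure{u}\geq (1-b)\log\frac{1-b}{1-a}$, so that $\kl{\pMeasure'}{\pMeasure}\geq d(b\|a) := b\log\frac{b}{a}+(1-b)\log\frac{1-b}{1-a}$. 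It then suffices to establish the scalar bound $(a-b)^2\leq\tfrac12\,d(b\|a)$, since combining it with the previous identity yields $\tfrac12\int_\locDomain|\pDensity-\pDensity'|\diff\measure{u}=|a-b|\leq\sqrt{\tfrac12\,d(b\|a)}\leq\sqrt{\tfrac12\,\kl{\pMeasure'}{\pMeasure}}$, which is the asserted bound in total-variation normalisation.

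The scalar inequality is the technical heart, and I would treat it by fixing $a\in(0,1)$ and studying $\varphi(b):=d(b\|a)-2(a-b)^2$ on $(0,1)$. A direct computation gives $\varphi(a)=0$ and $\varphi'(a)=0$, while $\varphi''(b)=\frac{1}{b(1-b)}-4\geq 0$ because $b(1-b)\leq\tfrac14$. Hence $\varphi$ is convex with a stationary point at $b=a$ where it vanishes, so $\varphi\geq 0$ everywhere, i.e. $d(b\|a)\geq 2(a-b)^2$, as required.

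I expect the reduction step to be the only part requiring care to state cleanly: one must justify that passing to the Bernoulli laws of parameters $a,b$ cannot increase the Kullback–Leibler divergence. The cleanest route, as sketched above, is to invoke the log-sum inequality directly on the two blocks $A$ and $A^c$ rather than appealing to a general data-processing theorem, after which everything is routine. I would also flag the normalisation explicitly: the quantity the scalar reduction controls is the total-variation distance $\tfrac12\int_\locDomain|\pDensity-\pDensity'|\diff\measure{u}$, so the constant $\sqrt{1/2}$ on the right-hand side is exactly the standard Pinsker constant under that convention.
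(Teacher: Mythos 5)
Your proof is the classical argument for Pinsker's inequality and it is sound: the reduction via $A=\{\pDensity\ge\pDensity'\}$, which gives $\int_\locDomain|\pDensity-\pDensity'|\diff\measure{u}=2|\pMeasure[A]-\pMeasure'[A]|$; the coarse-graining $\kl{\pMeasure'}{\pMeasure}\ge d(b\|a)$ by the log-sum inequality applied separately on $A$ and $A^c$; and the convexity computation $\varphi''(b)=\frac{1}{b(1-b)}-4\ge0$ yielding $d(b\|a)\ge2(a-b)^2$ are all correct (the boundary cases $b\in\{0,1\}$ follow from the usual conventions, and $a=0$ forces $b=0$ by the assumed absolute continuity). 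Note that the paper contains no proof to compare against: the lemma is stated as imported background, attributed to \citet{Boucheron2013}, and invoked only in the proof of \autoref{thr:approx}.

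The normalisation issue you flag in your final paragraph deserves to be stated as a correction rather than a remark. What your argument establishes is $\tfrac12\int_\locDomain|\pDensity-\pDensity'|\diff\measure{u}\le\sqrt{\tfrac12\kl{\pMeasure'}{\pMeasure}}$; the lemma as printed drops the factor $\tfrac12$ on the left-hand side, and in that literal form it is false, so no proof could establish it. Concretely, take $\locDomain=\{0,1\}$ with $\measure{u}$ the counting measure, $\pMeasure$ Bernoulli with parameter $\tfrac12$ and $\pMeasure'$ Bernoulli with parameter $\tfrac12+\epsilon$: then $\int_\locDomain|\pDensity-\pDensity'|\diff\measure{u}=2\epsilon$ while $\sqrt{\tfrac12\kl{\pMeasure'}{\pMeasure}}=\epsilon+o(\epsilon)$, so the printed inequality fails for all small $\epsilon>0$. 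Equivalently, the printed form would require $d(b\|a)\ge8(a-b)^2$, whereas the constant $2$ in your scalar bound is optimal. The statement should read either $\tfrac12\int_\locDomain|\pDensity-\pDensity'|\diff\measure{u}\le\sqrt{\tfrac12\kl{\pMeasure'}{\pMeasure}}$ or $\int_\locDomain|\pDensity-\pDensity'|\diff\measure{u}\le\sqrt{2\kl{\pMeasure'}{\pMeasure}}$. Propagating the corrected constant through \autoref{eq:proof-approx-pinsker} changes the bound of \autoref{thr:approx} from $\frac{\bound}{2}$ times the square-root term to $\bound$ times it: a factor of $2$ in the constant only, leaving that proposition and everything downstream qualitatively intact.
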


Now we can proceed to the proof of \autoref{thr:approx}, which is restated below.

\thrapprox*
\begin{proof}
\autoref{thr:approx} refers to the approximation error between the model $\Px$ and the actual distribution $\pMeasure_\location^\qry$ in terms of difference in the expected value of a function $f \in \Hspace_k$. The result simply follows by applying Pinsker's inequality (\autoref{thr:pinsker}).

For any $\iterIdx\geq 1$ and $\location\in\Sspace$, let $\hat{\pDensity}_\location$ and $\pDensity_\location$ denote the probability density functions of $\Px$ and $\pMeasure_\location^\qry$, respectively. Then we have that:
\begin{equation}
\begin{split}
|\expectation_{\pMeasure_\location^\qry}[f] - \expectation_{\Px}[f]| &= \left\lvert \int_\locDomain f(\location)(\pDensity_\location(\location')-\hat{\pDensity}_\location(\location'))\diff\location' \right\rvert\\
&\leq \norm{f}_\infty\int_\locDomain |\pDensity_\location(\location')-\hat{\pDensity}_\location(\location')|\diff\location'~.
\end{split}
\label{eq:proof-approx-bounds}
\end{equation}
Now note that, by the Cauchy-Schwartz inequality and $k$'s reproducing property, for any $\location \in \locDomain$:
\begin{equation}
\begin{split}
\norm{f}_\infty&=\sup_{\location\in\locDomain} |f(\location)|\\
&= \sup_{\location\in\locDomain} |\inner{f, k(\cdot,\location)}_{k}|\\
&\leq \sup_{\location\in\locDomain}\norm{f}_k \sqrt{k(\location,\location)}\\
&\leq \norm{f}_k~,
\end{split}
\label{eq:proof-approx-sup}
\end{equation}
since $k(\location,\location) \leq 1$ under our regularity assumptions. 

As both $\Px$ and $\pMeasure_\location^\qry$ are Gaussian measures, their support is the whole $\locDomain$, so that they are absolutely continuous with respect to each other. Then we can apply Pinsker's inequality to upper bound the remaining term in \autoref{eq:proof-approx-bounds}, which yields:
\begin{equation}
\begin{split}
\int_\locDomain |\pDensity_\location(\location')-\hat{\pDensity}_\location(\location')|\diff\location' &= \int_\locDomain |\hat{\pDensity}_\location(\location') - \pDensity_\location(\location')|\diff\location'\\
&\leq  \sqrt{\frac{1}{2}\kl{\pMeasure_\location^\qry}{\Px}}~.
\end{split}
\label{eq:proof-approx-pinsker}
\end{equation}
Plugging this result and the one in \autoref{eq:proof-approx-sup} back into \autoref{eq:proof-approx-bounds} yields:
\begin{multline}
\forall \iterIdx \geq 1,\,\forall\location\in\Sspace,\\ |\expectation_{\pMeasure_\location^\qry}[f] - \expectation_{\Px}[f]| \leq \norm{f}_k \sqrt{\frac{1}{2}\kl{\pMeasure_\location^\qry}{\Px}}~.
\label{eq:proof-approx-kl-bound}
\end{multline}

The  Kullback-Leibler divergence between two Gaussian distributions on $\R^\locDim$, $\pMeasure_\location^\qry$ and $\Px$, with covariance matrices as stated and same mean vectors is given by:
\begin{equation}
\kl{\pMeasure_\location^\qry}{\Px} = 
\frac{1}{2}\left(\tr(\mat{\hat{\Sigma}}^{-1}\mat{\Sigma}^\qry) - \locDim + \log\frac{|\mat{\hat{\Sigma}}|}{|\mat{\Sigma}^\qry|}\right)~,
\label{eq:app-g-kl}
\end{equation}
which comes from a known result \citep[p. 203]{Rasmussen2006} and the fact that:
\begin{equation}
\tr(\mat{\hat{\Sigma}}^{-1}(\mat{\Sigma}^\qry-\mat{\hat{\Sigma}})) = \tr(\mat{\hat{\Sigma}}^{-1}\mat{\Sigma}^\qry-\eye) = \tr(\mat{\hat{\Sigma}}^{-1}\mat{\Sigma}^\qry) - \locDim~.
\end{equation}
Replacing \autoref{eq:app-g-kl} into \autoref{eq:proof-approx-kl-bound} yields the result in \autoref{thr:approx}.
\end{proof}

\subsection{The uncertain-inputs squared-exponential kernel}
\label{sec:k_use}
Here we present the formulation for the uncertain-inputs squared exponential kernel when both inputs follow a Gaussian distribution. This formulation is the analytical solution for Equation \ref{eq:ugp-k} under these settings, and is also found in \citet[Eq. 3.53]{Girard2004}. Here we present it as follows:
\begin{multline}
\label{eq:k_use}
\kp(\normal(\locMean,\mat{\Sigma}),\normal(\locMean',\mat{\Sigma}')) =\\
\frac{
\sigma_f^2 \exp\left(-\frac{1}{2} (\locMean-\locMean')^\transpose (\mathbf{W} + \mat{\Sigma} + \mat{\Sigma}')^{-1} (\locMean - \locMean') \right)
}{
|\mathbf{I} + \mathbf{W}^{-1}(\mat{\Sigma} + \mat{\Sigma}')|^{1/2}
}~,
\end{multline}
where $\sigma^2_f$ is a signal variance parameter, set to 1 in our experiments, and $\mathbf{W}$ is a diagonal squared length-scales matrix. We used Equation \ref{eq:k_use} to implement the GP covariance function for uGP-UCB in the experiments, while the other methods were configured with the deterministic-inputs squared-exponential kernel.

\bibliography{references}

\end{document}